\journal{Robotics and Autonomous Systems}
\DeclareMathAlphabet{\mathcal}{OMS}{cmsy}{m}{n}
\newcolumntype{P}[1]{>{\centering\arraybackslash}p{#1}}
\newtheorem{lemma}{Lemma}
\newtheorem{theorem}{Theorem}
\newtheorem{assumption}{Assumption}
\newdefinition{remark}{Remark}
\newdefinition{definition}{Definition}
\newproof{proof}{Proof}
\newcommand*\bigcdot{\mathpalette\bigcdot@{1}}
\newcommand*\bigcdot@[2]{\mathbin{\vcenter{\hbox{\scalebox{#2}{$\m@th#1\bullet$}}}}}
\begin{document}
\newcommand{\cmmnt}[1]{}
\newcommand\blfootnote[1]{%
  \begingroup
  \renewcommand\thefootnote{}\footnote{#1}%
  \addtocounter{footnote}{-1}%
  \endgroup
}

\begin{frontmatter}

\title{Marine Locomotion: A Tethered UAV-Buoy System with Surge Velocity Control}



\author[1]{Ahmad {Kourani}}
\ead{ahk42@mail.aub.edu}

\author[2]{Naseem {Daher}\corref{cor1}}
\cortext[cor1]{Corresponding author.}
\ead{nd38@aub.edu.lb}

\address[1]{Vision and Robotics Lab, Department of Mechanical Engineering, American University of Beirut, Riad El-Solh, Beirut, 1107 2020, Lebanon}
\address[2]{Vision and Robotics Lab, Department of Electrical and Computer Engineering, American University of Beirut, Riad El-Solh, Beirut, 1107 2020, Lebanon}

\begin{abstract}
Unmanned aerial vehicles (UAVs) are reaching offshore. In this work, we formulate the novel problem of a marine locomotive quadrotor UAV, which manipulates the surge velocity of a floating buoy by means of a cable. 
The proposed robotic system can have a variety of novel applications for UAVs where their high speed and maneuverability, as well as their ease of deployment and wide field of vision, give them a superior advantage. In addition, the major limitation of limited flight time of quadrotor UAVs is typically addressed through an umbilical power cable, which naturally integrates with the proposed system. 
A detailed high-fidelity dynamic model is presented for the buoy, UAV, and water environment. In addition, a stable control system design is proposed to manipulate the surge velocity of the buoy within certain constraints that keep the buoy in contact with the water surface. 
Polar coordinates are used in the controller design process since they outperform traditional Cartesian-based velocity controllers when it comes to ensuring correlated effects on the tracking performance, where each control channel independently affects one control parameter.
The system model and controller design are validated in numerical simulation under different wave scenarios.
\end{abstract}

\begin{keyword}
Aerial Systems: Applications, Marine Robotics, Motion Control, Locomotive UAV, Floating Buoy Manipulation.
\end{keyword}

\end{frontmatter}




\section{Introduction} \label{introduction}

%
Aerial drones are finding their way into different sectors of the industry, including construction \cite{Yang2018a,Krizmancic2020}, agriculture \cite{Tripicchio2015}, package delivery \cite{amazon}, inspection and maintenance \cite{Kovac2019}, to name a few, in which drones not only independently fly in the air, but also physically interact with the environment.
In terms of activities, unmanned aerial vehicles (UAVs) can move slung payloads in solo \cite{Sreenath2013} or cooperatively \cite{Srikanth2011} for transportation tasks,
they can interact and collaborate with unmanned ground vehicles (UGVs), and they can be equipped with robot manipulators to achieve different geometric configurations \cite{Naldi2012} or to cooperatively manipulate other objects \cite{Yang2018a, Staub2018, Nguyen2019}.

A common medium for UAVs to interact with their environment is through a tether \cite{Lupashin2013StabilizationTether,Nicotra2014}, as it can have a variety of interesting applications including the transmission of power, forces, and data. Tethered UAVs were studied for stability and control while maintaining positive cable tension in \cite{Nicotra2017}. 
%
%
In fact, the tether can enhance the stability and performance of an aerial vehicle \cite{Sandino2013TetheredHoveringStability,Sandino2014TetheredHoveringPerformance}, and can enable estimating the UAV's relative pose to the tether anchor using on-board inertial sensors only \cite{Lupashin2013StabilizationTether}, which can be upgraded with additional sensors, such as force sensors and encoders, for better performance \cite{Tognon2020AerialRobotsTethers}.
A major contribution to this field was presented in \cite{Tognon2017}, which proved that tethered aerial robotic systems are deferentially flat with respect to two outputs: the link's elevation angle and either the vehicle's attitude or the longitudinal link force.
\par
%
Aerial drones are well-suited for applications that meet the 4D criteria: dull, dirty, distant, and dangerous \cite{Weichenhain2019}. As such, the offshore oil industry and offshore wind-farms are excellent candidates for their adoption, given the potential of drones to become the go-to technology in assets inspection and infrastructure maintenance \cite{Weichenhain2019}.
For example, traditional offshore solutions such as inspecting offshore wind-farms entail moving a vessel, which is expensive and requires a human crew, unlike the deployment of drones that can significantly save cost and time \cite{Kovac2019}.
UAVs can also perform sensing jobs, place sensors, and perform on-sight repairs and maintenance \cite{Kovac2019}.
Furthermore, offshore applications of UAVs make it more likely for aviation authorities to permit their utilization, since they are deployed faraway from human populations \cite{Murison2018}. 
\par
%
Due to their limited power capacity and flight time, the interaction of UAVs with the marine environment is still in its early stages. Current uses are mainly limited to information gathering such as transmitting visual feedback to human operators, targeting the locations of floating objects for their retrieval \cite{Miskovic2014}, and generating and transmitting full area maps and path-planning for other agents to perform rescue missions \cite{Ozkan2019}. Physical interaction is limited to low-power applications such as landing assistance on a rocking ship \cite{SoRyeokOh2006}, power-feeding the UAV through a cable \cite{Talke2018,choi2014tethered}, and sensing jobs \cite{Tognon2019}.
\par
Although unmanned surface vehicles (USVs) are naturally-suited robots in marine environments, UAVs can outperform them in certain aspects that make it more practical to adopt UAV-based marine solutions and applications.
\sloppy First, UAVs are advantageous over USVs in terms of their field of vision (bird's-eye view), ease of deployment, and maneuverability, all of which give UAVs the advantage while performing tasks in unstructured and hard-to-reach areas, and tasks that require precision and quick deployment. In addition, UAVs are especially advantageous in rivers since they can follow shorter paths above land and avoid in-water obstacles and waterfalls.
Furthermore, it is more challenging to deploy and retrieve USVs since they require direct access to the water surface, whereas UAV's benefit from their vertical takeoff and landing (VTOL) capabilities to be independently deployed from anywhere.
This fact highlights the advantage that UAVs have in addressing the issue of the limited and expensive free-space on offshore structures, vehicles, and coastal strips that stand to benefit from deploying aerial robotic system solutions.
\par
From the above literature survey and discussion, it is evident that having an integrated system that incorporates an umbilical power cable can open the door in front of a whole new level of UAV marine applications.
An analogous system was investigated in \cite{Papachristos2014}, in which an unmanned ground vehicle carries the power source while following a tethered UAV.
Furthermore, an optimal length and tension design of a cable that links a UAV and USV was provided in \cite{Talke2018}. The optimization problem minimizes both fouling (cable entanglement or jamming) and excessive downforce on the UAV during dynamic heaves, which boosts the power capacity of the UAV and simultaneously optimizes the dynamic performance of the coupled system.
In addition, the employment of USVs as a landing platform has been studied in the literature; for instance, a coupled UAV$-$USV system was proposed in \cite{Shao2019UAV-USV}, where the USV is equipped with an expendable landing deck for additional safety, and the system serves as a foundation for further collaborative tasks.
\par
Leveraging the technological advances in UAVs technology in terms of robustness, accuracy, operational cost, and lately, power efficiency,
and motivated by applications requiring fast action with minimal water surface disruption \cite{Saleh2019},
we are proposing the employment of a quadrotor UAV to manipulate a passive floating object via a cable, whereby the quadrotor performs the function of a locomotive.
The umbilical power line solution naturally integrates into this system, where the cable can be used for both force and power transmission. 
The hereby proposed problem generalizes the fixed-point tether described in \cite{Nicotra2014} to a moving-frame tether, namely planar motion in the horizontal and vertical directions, and is subject to additional constraints such as maintaining contact with the water surface. The formulated problem and proposed solution pave the way in front of numerous UAV$-$USV interaction applications, some of which are described next.
\par
%
%
The proposed marine locomotive UAV system can be used in coordination with nearby ships and marine structures to increase their maneuverability and decrease their response time, as well as nearshore and other water surfaces such as rivers and across waterfalls.
The proposed system can help in performing a variety of tasks including rescue operations, floating objects recovery, building and inspecting marine structures, water samples collection, delicately placing and relocating marine sensors and buoys with minimal water surface disruption, fishing activities, and water surface clean-up efforts, to name a few. In this context, we are motivated by the marine application in \cite{Saleh2019}, which proposed a sensor for measuring oil slick thickness during marine oil spill events. The proposed sensor is fixed to a floating buoy that is pulled by another vessel to skim the water surface. One main challenge in the proposed solution lies in the vessel's motion ahead of the sensor, which tends to disturb the oil layer and thus reduces the measurement's accuracy.
\par
%

%
\par
%
%
This paper offers several technical contributions. First, the novel problem of the marine locomotive UAV is formulated, which paves the way for further research into the interaction between UAVs and the marine environment.
Second, the system is defined in a sea/ocean environment that accounts for the presence of gravity waves and surface current, which naturally extends to wave-free water mediums. 
Third, the buoy and quadrotor UAV coupled dynamics are modelled with high fidelity using the Lagrangian formulation with appropriate constraints for the tethered UAV$-$buoy system. 
Forth, the attainable equilibrium states are derived with a proper definition of the system's operational limits and constraints in terms of cable tension, water surface contact, and buoy velocity.
Fifth, we design and validate a buoy surge velocity control system, supervised by a state machine that switches between operational modes, which results in accurate tracking performance even in the presence of disturbing waves, water currents, and feedback noise, while reducing the system's energy consumption by maintaining a constant UAV altitude. The controller relies on polar coordinates with respect to the buoy's reference frame to realize correlated tracking, which outperforms traditional Cartesian-based and unsupervised UAV-only velocity controllers that do not lend themselves well to this application.
Lastly, we make available a physics engine that can be used for simulating tethered UAV$-$buoy locomotives via a custom-built simulator\footnote{\texttt{github.com/KouraniMEKA/Marine-Locomotive-UAV}}.
\par
%
%
The rest of this paper is structured as follows. A detailed description of the tethered UAV$-$buoy system model is presented in Section \ref{sec_modeling}. The designed control system is detailed in Section \ref{sec_controller_design}. Section \ref{sec_simulation} presents numerical simulation results that demonstrate the validity of the derived system model and the effectiveness of the designed controller. Section \ref{sec_practical_considerations} discusses some practical considerations for the implementation of the proposed system, and finally Section \ref{sec_conclusion} concludes the paper and provides an outlook into future work. \par
%
%
%
\section{Tethered UAV-Buoy System Model} \label{sec_modeling} 
The dynamic model of the tethered UAV$-$buoy system requires the integration of multiple domains including the fluid medium; the dynamics of the floating buoy, the UAV, and the cable; and the combined system of rigid bodies. In this section, we introduce the required subsystems to formulate the problem on hand. \par
\subsection{Preliminaries} \label{subsec_preliminaries}
This section introduces some of the critical notations that are used throughout the paper. We let the set of positive-real numbers $\{x \in \mathbb{R}\,|\,x>0 \}$ be denoted as $\mathbb{R}_{>0}$, and the set of non-negative real numbers $\{x \in \mathbb{R}\,|\,x\geq 0 \}$ be denoted as $\mathbb{R}_{\geq0}$.
Also, let $s_{\bigcdot}$, $c_{\bigcdot}$, and $t_{\bigcdot}$ respectively be the sine, cosine, and tangent functions for some angle ($\bigcdot$).
In addition, let $\|\cdot\|$ denote the $L_2$ norm.
%
%
\subsection{Problem Definition} \label{subsec_problem_definition}
Consider the two-dimensional (2D) space in the water vertical plane where the problem is set up as shown in Fig.~\ref{fig_Buoy_UAV_Annotations}, and let $\mathcal{W}=\{x,y\}$ represent the inertial frame of reference whose origin, $\mathcal{O}_{\mathrm{I}}$, is at the local mean sea level horizontal line.
%
\begin{figure}
\centerline{\includegraphics[width=3.45in]{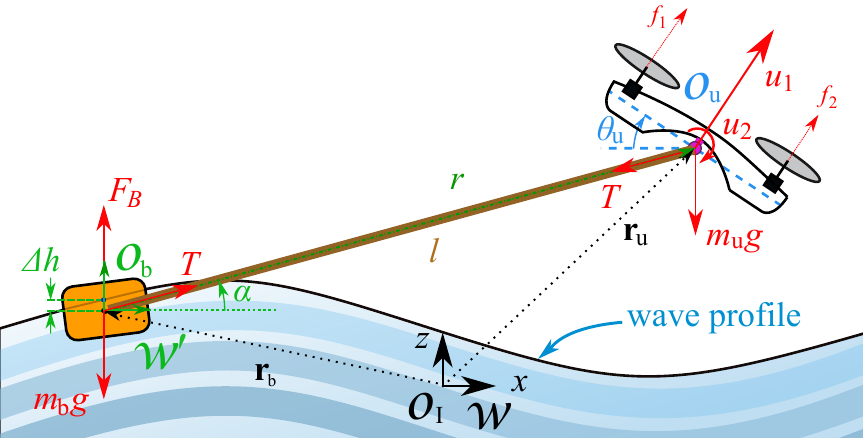}}
\caption{Planar model of a quadrotor UAV pulling a floating buoy through a tether.}
\label{fig_Buoy_UAV_Annotations}
\end{figure}
%
Considering the tethered UAV$-$buoy system depicted in Fig.~\ref{fig_Buoy_UAV_Annotations},
the buoy is physically connected to the UAV by means of a cable of length $l \in \mathbb{R}_{>0}$, forming an angle $\alpha \in (0,\pi)$ with the positive $x$-axis, which is defined as the elevation angle. 
%
Let $\mathbf{r}_{\mathrm{b}}=\{x_{\mathrm{b}},z_{\mathrm{b}}\} \in \mathbb{R}^2$ and $\mathbf{r}_{\mathrm{u}}=\{x_{\mathrm{u}},z_{\mathrm{u}}\} \in \mathbb{R}^2$ respectively be the coordinates of the buoy's center of mass, ($\mathcal{O}_{\mathrm{b}}$), and that of the UAV, ($\mathcal{O}_{\mathrm{u}}$), in $\mathcal{W}$; for ease of use, we set $V:=\dot{x}_{\mathrm{b}}$ to depict the buoy's horizontal velocity. Let $\mathcal{B}_{\mathrm{b}}$ and $\mathcal{B}_{\mathrm{u}}$ be the body-fixed reference frames of the buoy at $\mathcal{O}_{\mathrm{b}}$, and of the quadrotor at $\mathcal{O}_{\mathrm{u}}$, respectively.
The floating buoy has a volume $ \curlyvee_{\mathrm{b}}\in \mathbb{R}_{>0}$, a bounded mass $m_{\mathrm{b}}\in (0,\rho_{\mathrm{w}} \curlyvee_{\mathrm{b}})$ moment of inertia $J_{\mathrm{b}}\in \mathbb{R}_{>0}$ in $\mathcal{B}_{\mathrm{b}}$; the quadrotor UAV has a mass $m_{\mathrm{u}}$ and a moment of inertia $J_{\mathrm{u}}\in \mathbb{R}_{>0}$ in $\mathcal{B}_{\mathrm{u}}$.
Also let the orientation, measured clockwise, of $\mathcal{B}_{\mathrm{b}}$ and $\mathcal{B}_{\mathrm{u}}$ with respect to $\mathcal{W}$ be described by the angles $\theta_{\mathrm{b}}$ and $\theta_{\mathrm{u}} \in (-\pi,\pi]$, respectively. 
Let $\bm{V}_{\mathrm{b}}=\{u_{\mathrm{b}},w_{\mathrm{b}}\} \in \mathbb{R}^2$ and $\Omega_{\mathrm{b}} \in \mathbb{R}$ be the linear and angular velocities of the buoy in $\mathcal{B}_{\mathrm{b}}$, respectively.
Furthermore, let the translational rotation matrix from any body frame to $\mathcal{W}$ be described as:
\begin{equation} \label{eq_Rotation_Matrix_e2b_generalized} 
    \mathbf{R}_{\bigcdot}=
    \begin{bmatrix}
        c_{\bigcdot} &        -s_{\bigcdot} \\
        s_{\bigcdot} &  \;\;\; c_{\bigcdot} \\
    \end{bmatrix}.
\end{equation} 
\par
Both the buoy and the UAV are subject to gravitational acceleration, $\textsl{g}$, and cable tension, $T, \in \mathbb{R}_{\geq0}$. 
Moreover, the buoy is subjected to hydrostatic and hydrodynamic forces that are described later,
and the UAV propulsion can be simplified to only include the total thrust $u_1 \in \mathbb{R}_{\geq0}$, and a single torque that induces a pitch motion $u_2 \in \mathbb{R}$ since the motion of the system is restricted under the scope of this work to the vertical plane. Considering the relatively faster response of the UAV actuators as compared to the UAV itself, their dynamics are neglected in modeling. \par
\begin{remark} \label{rem_buoy_density}
    The scope of this work covers the manipulation of a floating buoy, thus the buoy's average density should not exceed the density of water, which is achieved with the constraint $m_{\mathrm{b}} \in (0,\rho_{\mathrm{w}} \curlyvee_\mathrm{b})$.
\end{remark}
\begin{assumption} \label{assump_cable}
    The cable is inextensible; it is attached to the buoy's center of mass at one end and to the UAV's center of mass at the other via revolute joints to prevent moment transmission; and for relatively small systems considered in this work, it can be of negligible mass. Considerations for heavy cables (slung payload) are provided in Section~\ref{sec_practical_considerations}. 
\end{assumption}
%
\subsection{Water Medium Model} \label{subsec_water_medium}
The water medium under consideration here is the sea/ocean, where the main aspects of interest are gravity waves and water surface current. \par
\subsubsection{Gravity Wave Model}
\begin{assumption} \label{assump_linear_waves_theory}
    In the considered problem environment, the water depth is assumed to be much larger than the wavelength of gravity waves, which are assumed to be of moderate height. This permits adopting linear wave theory in this work \cite{faltinsen1990}. In addition, the wave direction is limited to be in the vertical ($x-z$) plane. 
\end{assumption} \par
Based on Assumption \ref{assump_linear_waves_theory}, the water elevation variation, $\zeta$, at time $t$ and horizontal position $x$ due to gravity waves is statistically described as:
\begin{equation} \label{eq_water_surface_elevation}
    \zeta(x,t) = \sum_{n}^{N} A_n \sin(d_n \omega_n t - k_n x + \sigma_n),
\end{equation}
\noindent where $A_n$, $\omega_n$, $k_n \in \mathbb{R}_{\geq 0}$, $d_n \in \{-1,1\}$, and $\sigma_n \in (-\pi,\pi]$ are respectively the wave amplitude, circular frequency, wave number, wave direction coefficient, and random phase angle of wave component number $n \in S_n$ with $S_n = \{1 \leq n \leq N \, | \, N  \in \mathbb{N}\}$. Furthermore, based on Assumption \ref{assump_linear_waves_theory}, the wave number in deep water is given by the dispersion relation as $k_n = \omega_n^2/\textsl{g}$.
The horizontal and vertical fluid particles' wave-induced velocities can be prescribed as \cite{faltinsen1990}:
\begin{equation} \label{eq_wave_vel}
    \begin{split}
        v^w_{x}(x,z,t) & = \sum_{n}^{N} d_n \omega_n A_n e^{k_n z}   \sin(d_n \omega_n t - k_n x + \sigma_n),\\
        v^w_{z}(x,z,t) & = \sum_{n}^{N} d_n \omega_n A_n e^{k_n z}   \cos(d_n \omega_n t - k_n x + \sigma_n).\\
    \end{split}
\end{equation}
\noindent where $\omega_n$ relates to the wave period, $\mathrm{T}_n$, via $\omega_n = 2\pi / \mathrm{T}_n$. 
\par
\subsubsection{ Water Current}
For brevity purposes, a simple yet comprehensive model of the water current is adopted. The water surface current, acting in the horizontal $x$-direction, is given as:
\begin{equation} \label{eq_current}
    U_{\mathrm{c}} = U_{\mathrm{l}} + U_{\mathrm{s}}, 
\end{equation}
\noindent where $U_{\mathrm{l}} \in \mathbb{R}$ is the lumped sum of different water current components, and $U_{\mathrm{s}}\in \mathbb{R}$ is the component generated from Stokes drift \cite{Fossen1995}.
The Stokes drift velocity is one component that emerges from nonlinear wave theory, and is defined as the average transport velocity of a wave over one period:
\begin{equation} \label{eq_stokes_drift_vel}
    U_{\mathrm{s}}(z) = \sum_{n}^{N} d_n A^2_n \omega_n k_n e^{2 k_n z}.
\end{equation}
\par
%
\subsection{Buoy's Dynamic Model} \label{subsec_buoy_model}
A floating buoy is subjected to different types of forces, with the main ones being radiation, damping, and restoration forces.
The radiation forces consist of the added mass and added damping. In this section, a detailed description of the buoy model is presented after introducing the following assumptions. \par
\begin{assumption} \label{assump_buoy_body_axes}
    The axes of the buoy's body frame coincide with its principle axes of inertia, which is a common practice to simplify the modeling of marine vehicles \cite{Fossen1995}.
\end{assumption} \par
\begin{assumption}\label{assump_neglected_air_friction}
    The water-buoy friction dominates the energy dissipation in the system, and the system is assumed to operate under moderate weather conditions, thus energy losses due to air drag are neglected.
\end{assumption}
Considering the buoy dynamics in $\mathcal{B}_{\mathrm{b}}$ with the state vector, $\bm{\nu}_{\mathrm{b}} = [u_{\mathrm{b}},w_{\mathrm{b}},\Omega_{\mathrm{b}}]^{\intercal}$, and applying Newton's second law of motion yields: 
\begin{equation} \label{eq_buoy_body_Newton}
    \mathbf{M}'_{\mathrm{b}} \dot{\bm{\nu}}_{\mathrm{b}} + \mathbf{C}'_{\mathrm{b}} \bm{\nu}_{\mathrm{b}} + \mathbf{D}'_{\mathrm{b}} \tilde{\bm{\nu}}_{\mathrm{b}} + \mathbf{G}'_{\mathrm{b}} = \bm{\tau}'_{\mathrm{b}},
\end{equation}
\noindent where $\mathbf{M}'_{\mathrm{b}}$, $\mathbf{C}'_{\mathrm{b}}$, and $\mathbf{D}'_{\mathrm{b}} \in \mathbb{R}^{3\times3}$ are respectively the buoy's inertia, Coriolis, and damping matrices expressed in $\mathcal{B}_{\mathrm{b}}$; the relative velocity vector is defined as $\tilde{\bm{\nu}}_{\mathrm{b}} = \bm{\nu}_{\mathrm{b}} - [U_{\mathrm{c}}+v_x^w c_{\theta_{\mathrm{b}}}-v_z^w s_{\theta_{\mathrm{b}}},v_z^w c_{\theta_{\mathrm{b}}} +v_x^w s_{\theta_{\mathrm{b}}},0]^{\intercal}$; $\mathbf{G}'_{\mathrm{b}} \in \mathbb{R}^3$ is the gravitational forces and moments vector; and $\bm{\tau}'_{\mathrm{b}} \in \mathbb{R}^3$ includes external forces and moments. \par
%
%
The inertia matrix is $\mathbf{M}'_{\mathrm{b}}=diag(m_{\mathrm{b}}+a_{11},m_{\mathrm{b}}+a_{33},J_{\mathrm{b}}+a_{55})$, where $a_{11}$, $a_{33}$, and $a_{55} \in \mathbb{R}_{\geq0}$ are the surge, heave, and pitch rate components of the generalized added mass matrix. 
The added mass can be described as the amount of fluid that is accelerated with the body, and can be written as a function of the buoy's mass and moment of inertia. 
Furthermore, for low frequency motion, $a_{11} \approx 0.05 m_{\mathrm{b}}$, and $a_{33} \approx m_{\mathrm{b}}$ \cite{Fossen1995}.
Given $\mathbf{M}'_{\mathrm{b}}$, the Coriolis matrix is calculated as follows:
\begin{equation} \label{eq_buoy_body_Coriolis}
    \mathbf{C}'_{\mathrm{b}}=
    \begin{bmatrix}
            0        &          0           & (m_{\mathrm{b}} + a_{33}) w_{\mathrm{b}}  \\
            0        &  -(m_{\mathrm{b}} + a_{11}) u_{\mathrm{b}} & 0\\
        -a_{33} w_{\mathrm{b}}  &   a_{11} u_{\mathrm{b}}         & 0 \\
    \end{bmatrix}. 
\end{equation}
\noindent The total damping term of the buoy in $\mathcal{B}_{\mathrm{b}}$ is expressed as:
\begin{equation} \label{eq_buoy_body_damping}
    \mathbf{D}'_{\mathrm{b}} = \mathbf{D}_{\mathbf{P}}  + \mathbf{D}_{\mathbf{S}}  + \mathbf{D}_{\mathbf{W}} ,
\end{equation}
\noindent where $\mathbf{D}_{\mathbf{P}}=diag(b_{11},b_{33},b_{55}) \in \mathbb{R}^{3\times3}$ is the radiation induced potential damping matrix with surge, heave, and pitch components, and $\mathbf{D}_{\mathbf{S}}=diag(D_{\mathrm{S},1},D_{\mathrm{S},2},D_{\mathrm{S},3}) \in \mathbb{R}^{3\times3}$ is the skin friction matrix, calculated as:
\begin{equation} \label{eq_buoy_body_friction_coeff}
    D_{\mathrm{S},i} = C_{\mathrm{S},i} \mathrm{A}_{\mathrm{wt}} \frac{1}{2} \rho_{\mathrm{w}} |\tilde{\nu}|, \quad i=1,2,
\end{equation}
\noindent where $C_{\mathrm{S},i} \in \mathbb{R}_{>0}$ is a constant, $\mathrm{A}_{\mathrm{wt}} \in \mathbb{R}_{\geq 0}$ is the buoy's wetted area, and $D_{\mathrm{S},3} \in \mathbb{R}_{\geq0}$.
$\mathbf{D}_{\mathbf{W}} \in \mathbb{R}^{3\times3}$ is the wave drift damping matrix, which will be dropped from (\ref{eq_buoy_body_damping}) since its effect is already included in the Stokes drift velocity in (\ref{eq_stokes_drift_vel}). 
Assuming that the buoy, with a mean immersed height $\bar{h}_{\mathrm{im}}$, vertically oscillates in the $x-z$ plane at $\omega_{\mathrm{o},3}\in \mathbb{R}_{\geq0}$, such that $\omega_{\mathrm{o},3} < 0.2 \sqrt{\textsl{g} /\bar{h}_{\mathrm{im}}}$, which is practical for the problem on hand,  we have $b_{33} \approx 2 m_{\mathrm{b}} \omega_{\mathrm{o},3}$. Moreover, the potential damping coefficients in the horizontal plane vanish at both limits $0$ and $\infty$ of the oscillation frequency, thus the potential damping in the $x$-direction is $b_{11} \approx 0$ \cite{Fossen1995}. 
\par
Referring to Assumption~\ref{assump_cable}, the buoy dynamics in (\ref{eq_buoy_body_Newton}) can be expressed in $\mathcal{W}$ with the state vector, $\bm{\eta}_{\mathrm{b}} = [x_{\mathrm{b}},z_{\mathrm{b}},\theta_{\mathrm{b}}]^{\intercal}$, as:
\begin{equation} \label{eq_buoy_inertial_Newton}
    \mathbf{M}_{\mathrm{b}} \ddot{\bm{\eta}}_{\mathrm{b}} + \mathbf{C}_{\mathrm{b}} \dot{\bm{\eta}}_{\mathrm{b}} + \mathbf{D}_{\mathrm{b}} \tilde{\dot{\bm{\eta}}}_{\mathrm{b}} + \mathbf{G}_{\mathrm{b}} = \bm{\tau}_{\mathrm{b}},
\end{equation}
\noindent where $\mathbf{D}_{\mathrm{b}}$, $\mathbf{C}_{\mathrm{b}}$, and $\mathbf{D}_{\mathrm{b}} \in \mathbb{R}^{3\times3}$ are respectively the buoy's inertia, Coriolis, and damping matrices expressed in $\mathcal{W}$;
the relative velocity vector is defined as $\tilde{\bm{\dot{\eta}}}_{\mathrm{b}} = [\tilde{\dot{\eta}}_{\mathrm{b},1},\tilde{\dot{\eta}}_{\mathrm{b},2},\tilde{\dot{\eta}}_{\mathrm{b},3}]^{\intercal} = \bm{\dot{\eta}}_{\mathrm{b}} - [U_{\mathrm{c}}+v_x^w,v_z^w,0]^{\intercal}$; $\mathbf{G}_{\mathrm{b}}$ and $\bm{\tau}_{\mathrm{b}}$ are respectively the vectors of the buoy's gravitational and other external forces and moments in $\mathcal{W}$ expressed as:
\begin{equation} \label{eq_buoy_inertial_forces}
        \mathbf{G}_{\mathrm{b}} = [0,m_{\mathrm{b}} \textsl{g},0]^{\intercal},\qquad
        \bm{\tau}_{\mathrm{b}} = [T c_{\alpha},F_{\mathrm{B}} + T s_{\alpha},F_{\mathrm{rs}}]^{\intercal},
\end{equation}
\noindent where $F_{\mathrm{B}} = \rho_{\mathrm{w}} \textsl{g} \curlyvee_{\mathrm{im}}$ is the buoyancy force, $\curlyvee_{\mathrm{im}} \in [0,\curlyvee_\mathrm{b}]$ is the immersed volume of the buoy, and $F_{\mathrm{rs}} = f_{\mathrm{p}}s_{\theta_{\mathrm{u}}}$ is the pitch restoring moment with $f_{\mathrm{p}} \in \mathbb{R}$ being the buoy's pitch restoring moment coefficient.
We also define:
\begin{equation} \label{eq_buoy_body_to_inertial_trans}
    \begin{split}
        & \mathbf{M}_{\mathrm{b}} = (\mathbf{R}_{\theta_{\mathrm{b}}}^{-1})^{\intercal} \mathbf{M}'_{\mathrm{b}} \mathbf{R}_{\theta_{\mathrm{b}}}^{-1},\\
        & \mathbf{D}_{\mathrm{b}} = (\mathbf{R}_{\theta_{\mathrm{b}}}^{-1})^{\intercal} \mathbf{D}'_{\mathrm{b}} \mathbf{R}_{\theta_{\mathrm{b}}}^{-1},\\
        & \mathbf{C}_{\mathrm{b}} \dot{\bm{\eta}}_{\mathrm{b}}  := \frac{1}{2} \dot{\mathbf{M}}_{\mathrm{b}} \dot{\bm{\eta}}_{\mathrm{b}},
    \end{split}
\end{equation}
\noindent where $\dot{\mathbf{M}}_{\mathrm{b}} = \dot{\bm{\eta}}_{\mathrm{b}}^{\intercal} (\partial \mathbf{M}_{\mathrm{b}} / \partial \bm{\eta}_{\mathrm{b}})$ \cite{Fossen1995}. 
An explicit description of $\mathbf{D}_{\mathrm{b}}$ and $\mathbf{D}_{\mathrm{b}}$ is given by:
\begin{equation} \label{eq_buoy_inertia_matrix} 
    \mathbf{M}_{\mathrm{b}} = 
        \setlength{\arraycolsep}{2pt}
        \renewcommand{\arraystretch}{0.8}
        \begin{bmatrix} 
            M'_{\mathrm{b},11} c_{\theta_{\mathrm{b}}}^2 + M'_{\mathrm{b},22} s_{\theta_{\mathrm{b}}}^2   &  s_{2\theta_{\mathrm{b}}}(M'_{\mathrm{b},22}-M'_{\mathrm{b},11})/2  & 0 \\
            s_{2\theta_{\mathrm{b}}}(M'_{\mathrm{b},22}-M'_{\mathrm{b},11})/2  & M'_{\mathrm{b},11} s_{\theta_{\mathrm{b}}}^2 + M'_{\mathrm{b},22} c_{\theta_{\mathrm{b}}}^2 & 0 \\
            0 & 0 & M'_{\mathrm{b},33}\\    
        \end{bmatrix},
\end{equation}
\noindent where $M'_{\mathrm{b},ii}$, $i=1,2,3$ are elements of the buoy inertia matrix in $\mathcal{B}_{\mathrm{b}}$, $\mathbf{M}'_{\mathrm{b}}$. 
The buoy's damping matrix in the inertial frame $\mathcal{W}$ is defined as:
\begin{equation} \label{eq_buoy_damping_matrix} 
    \setlength{\arraycolsep}{2pt}
    \renewcommand{\arraystretch}{0.8}
    \mathbf{D}_{\mathrm{b}} =
        \begin{bmatrix}
            D'_{\mathrm{b},11} c_{\theta_{\mathrm{b}}}^2 + D'_{\mathrm{b},22} s_{\theta_{\mathrm{b}}}^2   &  s_{2\theta_{\mathrm{b}}}(D'_{\mathrm{b},22}-D'_{\mathrm{b},11})/2  & 0 \\
            s_{2\theta_{\mathrm{b}}}(D'_{\mathrm{b},22}-D'_{\mathrm{b},11})/2  & D'_{\mathrm{b},11} s_{\theta_{\mathrm{b}}}^2 + D'_{\mathrm{b},22} c_{\theta_{\mathrm{b}}}^2 & 0 \\
            0 & 0 & D'_{\mathrm{b},33}\\    
        \end{bmatrix},
\end{equation}
\noindent where $D'_{\mathrm{b},ii}$, $i=1,2,3$ are elements of the buoy damping matrix in $\mathcal{B}_{\mathrm{b}}$, $\mathbf{D}'_{\mathrm{b}}$.
We also let $M_{\mathrm{b},ij}$, $D_{\mathrm{b},ij}$, and $C_{\mathrm{b},ij}$, $i,j=1,2,3$ be elements of $\mathbf{M}_{\mathrm{b}}$, $\mathbf{D}_{\mathrm{b}}$, and $\mathbf{C}_{\mathrm{b}}$, respectively. 
%
\subsection{UAV's Dynamic Model} \label{subsec_UAV_model}
Referring to Assumption~\ref{assump_cable}, 
and applying Newton's second law of motion on the UAV quadrotor system in $\mathcal{W}$ with the state vector, $\bm{\eta}_{\mathrm{u}} = [x_{\mathrm{u}},z_{\mathrm{u}},\theta_{\mathrm{u}}]^{\intercal}$, yields:
\begin{equation} \label{eq_UAV_inertial_Newton}
    \mathbf{M}_{\mathrm{u}} \ddot{\bm{\eta}}_{\mathrm{u}} + \mathbf{D}_{\mathrm{u}} \tilde{\dot{\bm{\eta}}}_{\mathrm{u}} + 
    \mathbf{G}_{\mathrm{u}} = \bm{\tau}_{\mathrm{u}}, 
\end{equation}
\noindent where $\mathbf{M}_{\mathrm{u}} = diag(m_{\mathrm{u}},m_{\mathrm{u}},J_{\mathrm{u}}) \in \mathbb{R}_{> 0}^{3\times3}$ and $\mathbf{D}_{\mathrm{u}}=diag(D_{\mathrm{u},1},\,D_{\mathrm{u},2},\,D_{\mathrm{u},3}) \in \mathbb{R}_{\geq 0}^{3\times3}$
are the UAV's inertia and damping friction matrices, respectively;
the UAV's relative velocity vector is 
$\tilde{\bm{\dot{\eta}}}_{\mathrm{u}} = [\tilde{\dot{\eta}}_{\mathrm{u},1},\tilde{\dot{\eta}}_{\mathrm{u},2},\tilde{\dot{\eta}}_{\mathrm{u},3}]^{\intercal} = \bm{\dot{\eta}}_{\mathrm{u}} - [u_{\mathrm{wd}},0,0]^{\intercal}$, with $u_{\mathrm{wd}}$ being the horizontal wind velocity;
and $\mathbf{G}_{\mathrm{u}}$ and $\bm{\tau}_{\mathrm{u}} \in \mathbb{R}^3$ are vectors of the UAV's gravitational and other external forces and moments in $\mathcal{W}$, respectively, expressed as: 
\begin{equation} \label{eq_UAV_inertial_forces}
        \mathbf{G}_{\mathrm{u}}  = [0,m_{\mathrm{u}} \textsl{g},0]^{\intercal},\;
        \bm{\tau}_{\mathrm{u}}  = [u_1 s_{\theta_{\mathrm{u}}} - T c_{\alpha},u_1 c_{\theta_{\mathrm{u}}} - T s_{\alpha},u_2]^{\intercal}.
\end{equation}
The damping matrix element of interest, $D_{{\mathrm{u}},1}$, is approximated as:
\begin{equation} \label{eq_UAV_friction_coeff}
    D_{{\mathrm{u}},1} = C_{\mathrm{u},1} \mathrm{A}_{\mathrm{cs},1}^{\mathrm{u}} \frac{1}{2} \rho_{\mathrm{a}} |\tilde{\dot{\eta}}_{\mathrm{u},1}|,
\end{equation}
\noindent where $C_{\mathrm{u},1} \in \mathbb{R}_{>0}$ is a constant, $\mathrm{A}_{\mathrm{cs},1}^{\mathrm{u}} \in \mathbb{R}_{\geq 0}$ is the UAV's cross-sectional area across the $zy$-plane, and $\rho_{\mathrm{a}}$ is the air density.
\noindent For more details on the quadrotor UAV model, see \cite{Zhang2019}.
\par
\subsection{System Constraints} \label{subsec_system_constraints}
In order to fully define the marine locomotive problem as a coupled UAV$-$buoy system, specific constraints are required and are presented hereafter, with their violations depicted in Fig.~\ref{fig_constraints}.
%
\begin{figure}
\centerline{\includegraphics[width=3.45in]{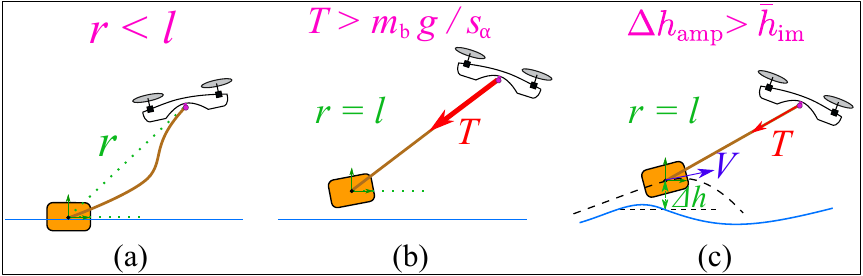}}
\caption{Depiction of the UAV$-$buoy system in violation of three constraints: (a) slack cable, (b) hanging buoy, and (c) `fly-over' phenomenon.}
\label{fig_constraints}
\end{figure}
\subsubsection{Taut Cable Constraint} \label{susubsec_positive_cable_tension}
This section introduces the resulting coupled dynamics of the UAV$-$buoy system, which is achieved when the tether links the two bodies and holds positive tension, i.e. with a taut-cable constraint that is opposite to what is shown in Fig.~\ref{fig_constraints}a. For this purpose, we let  $\mathcal{W}'=\{r',\alpha'\}$ be a rectilinear moving polar frame fixed to $\mathcal{O}_{\mathrm{b}}$, shown in Fig.~\ref{fig_Buoy_UAV_Annotations}; this frame does not rotate, and it is parallel to the inertial frame $\mathcal{W}$.
The position of the UAV in $\mathcal{W}$ with respect to $\mathcal{W'}$ is defined as: $\mathbf{r}=\mathbf{r}_{\mathrm{u}}-\mathbf{r}_{\mathrm{b}} \in \mathbb{R}^2$, 
and we let its coordinates in $\mathcal{W}'$ be $\mathbf{r}'=\{r,\alpha\}$, such that:
\begin{equation} \label{eq_UAV_polar_coordinates} 
        r = \|\mathbf{r}\|,\qquad
        \alpha = \text{atan2}(z_{\mathrm{u}}-z_{\mathrm{b}},x_{\mathrm{u}}-x_{\mathrm{b}}).
\end{equation}
\noindent We also let the rates vector, $\dot{\mathbf{r}}'$, be: 
\begin{equation} \label{eq_dot_r_in_W_prime} 
    \dot{\mathbf{r}}':=
    \begin{bmatrix}
        \dot{r}  \\
        r \dot{\alpha}  \\
    \end{bmatrix}
    =\mathbf{R}_{\alpha}^{\intercal}
    \begin{bmatrix}
        \dot{x}_{\mathrm{u}} - \dot{x}_{\mathrm{b}}  \\
        \dot{z}_{\mathrm{u}} - \dot{z}_{\mathrm{b}}  \\
    \end{bmatrix},
\end{equation}  
\noindent where $\mathbf{R}_{\alpha}^{\intercal}$ is the transformation matrix that rotates vectors in $\mathcal{W}$ to $\mathcal{W}'$, and we finally let the acceleration vector, $\ddot{\mathbf{r}}'$ be: 
\begin{equation} \label{eq_ddot_r_in_W_prime} 
    \ddot{\mathbf{r}}':=
    \begin{bmatrix}
        \ddot{r} -r\dot{\alpha}^2  \\
        r \ddot{\alpha} + 2 \dot{r} \dot{\alpha} \\
    \end{bmatrix}
    =\mathbf{R}_{\alpha}^{\intercal}
    \begin{bmatrix}
        \ddot{x}_{\mathrm{u}} - \ddot{x}_{\mathrm{b}}  \\
        \ddot{z}_{\mathrm{u}} - \ddot{z}_{\mathrm{b}}  \\
    \end{bmatrix}.
\end{equation}   
\begin{definition} \label{def_taut_cable}
    Based on Assumption~\ref{assump_cable}, the cable remains taut, i.e. it maintains tension, at time $t$ if $r(t) = l$. The taut-cable condition is expressed as:
\begin{equation} \label{eq_taut_cable_condition} 
    T > 0,
\end{equation}
\noindent under which the UAV$-$buoy system is labeled as `\textit{coupled}', otherwise it is labeled as `\textit{decoupled}'.
\end{definition}
\par
With Assumption \ref{assump_cable} and the taut-cable condition in (\ref{eq_taut_cable_condition}), we have $r=l$, and the polar coordinates of the UAV can be defined with respect to the buoy's center of mass, $\mathcal{O}_{\mathrm{b}}$, as: 
\begin{equation} \label{eq_UAV_pos_from_alpha} 
        x_{\mathrm{u}} = x_{\mathrm{b}} +l c_{\alpha},\qquad
        z_{\mathrm{u}} =z_{\mathrm{b}}+l s_{\alpha},
\end{equation}
and its velocity can be obtained as:
\begin{equation} \label{eq_UAV_vel_from_alpha} 
        \dot{x}_{\mathrm{u}} = \dot{x}_{\mathrm{b}}-(l s_{\alpha}) \dot{\alpha},\qquad
        \dot{z}_{\mathrm{u}} = \dot{z}_{\mathrm{b}}+(l c_{\alpha}) \dot{\alpha}.
\end{equation}
\par
\begin{lemma} \label{lemma_cable_tension}
    When the UAV$-$buoy system is coupled, the UAV's equations of motion in $\mathcal{W}'$ in polar coordinates notation are expressed as:
    \begin{equation} \label{eq_UAV_polar_frame} 
        \begin{split}
            - m_{\mathrm{u}} r \dot{\alpha}^2 & = m_{\mathrm{u}} ( - \ddot{x}_{\mathrm{b}} c_{\alpha} - \ddot{z}_{\mathrm{b}} s_{\alpha}) - m_{\mathrm{u}} \textsl{g} s_{\alpha} + u_1 s_{\alpha + \theta_{\mathrm{u}}} - T, \\
            m_{\mathrm{u}} r^2 \ddot{\alpha} & = m_{\mathrm{u}} r ( \ddot{x}_{\mathrm{b}} s_{\alpha} - \ddot{z}_{\mathrm{b}} c_{\alpha} ) - m_{\mathrm{u}} \textsl{g} r c_{\alpha} + r u_1 c_{\alpha + \theta_{\mathrm{u}}}.
        \end{split}
    \end{equation}
Also, let $V_{\mathrm{r}} := \tilde{\dot{\eta}}_{\mathrm{b},1} = V-U_{\mathrm{c}}-v_x^w$ represent the buoy$-$water relative surge velocity. If the condition in (\ref{eq_taut_cable_condition}) holds, the cable tension is expressed as:
    \begin{equation} \label{eq_cable_tension_estimation} T = 
    \begin{cases}
        \Big( \scalebox{0.80}{$ M_{\mathrm{b},11} \ddot{x}_{\mathrm{b}} + M_{\mathrm{b},12} \ddot{z}_{\mathrm{b}}
        + D_{\mathrm{b},11} V_{\mathrm{r}} $} & \\
        \;\;\; \scalebox{0.85}{$ +  D_{\mathrm{b},12} \tilde{\dot{\eta}}_{\mathrm{b},2} + C_{\mathrm{b},11} \dot{x}_{\mathrm{b}} + C_{\mathrm{b},12} \dot{z}_{\mathrm{b}} $} \Big) / c_{\alpha} ,
        & |\alpha - \frac{\pi}{2}| > \epsilon_{\alpha},\\
        %
        (u_1 c_{\theta_{\mathrm{u}}} - m_{\mathrm{u}} \textsl{g} - m_{\mathrm{u}} \ddot{z}_{\mathrm{u}}) / s_{\alpha}, & |\alpha - \frac{\pi}{2}| \leq \epsilon_{\alpha},
        \end{cases}
    \end{equation}  
    \noindent where $\epsilon_{\alpha} \in \mathbb{R}_{\geq0}$ is a constant that prevents singularity in a small region near $\alpha = \frac{\pi}{2}$.
\end{lemma}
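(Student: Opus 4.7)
The lemma has two parts: deriving the UAV's polar-frame equations of motion under the taut-cable constraint, and extracting the switched expression for the cable tension $T$. My plan is to project the UAV's inertial-frame Newton equations onto the polar basis of $\mathcal{W}'$ using the kinematic identity (\ref{eq_ddot_r_in_W_prime}) with the taut-cable constraint applied; for the tension, I will exploit the fact that the cable couples the buoy's horizontal equation and the UAV's vertical equation in two algebraically equivalent but differently conditioned ways, and select between them to avoid singularity at $\alpha = \pi/2$.

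For the polar equations, I would invoke Definition~\ref{def_taut_cable} so that $r \equiv l$, forcing $\dot{r} = \ddot{r} = 0$ in (\ref{eq_ddot_r_in_W_prime}). The resulting pair of projection identities then expresses $-r\dot{\alpha}^2$ and $r\ddot{\alpha}$ as linear combinations of $\ddot{x}_{\mathrm{u}} - \ddot{x}_{\mathrm{b}}$ and $\ddot{z}_{\mathrm{u}} - \ddot{z}_{\mathrm{b}}$ through $\mathbf{R}_\alpha^\intercal$. Multiplying by $m_{\mathrm{u}}$ and substituting the UAV Newton equations extracted from (\ref{eq_UAV_inertial_Newton})$-$(\ref{eq_UAV_inertial_forces}), namely $m_{\mathrm{u}} \ddot{x}_{\mathrm{u}} = u_1 s_{\theta_{\mathrm{u}}} - T c_\alpha$ and $m_{\mathrm{u}} \ddot{z}_{\mathrm{u}} = u_1 c_{\theta_{\mathrm{u}}} - T s_\alpha - m_{\mathrm{u}} \textsl{g}$, the tension contributions collapse as $-T(c_\alpha^2 + s_\alpha^2) = -T$ on the radial axis and cancel identically on the tangential axis; the thrust contributions combine through the angle-sum identities $c_\alpha s_{\theta_{\mathrm{u}}} + s_\alpha c_{\theta_{\mathrm{u}}} = s_{\alpha+\theta_{\mathrm{u}}}$ and $-s_\alpha s_{\theta_{\mathrm{u}}} + c_\alpha c_{\theta_{\mathrm{u}}} = c_{\alpha+\theta_{\mathrm{u}}}$. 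After grouping the $\ddot{x}_{\mathrm{b}}$, $\ddot{z}_{\mathrm{b}}$ inertial terms, (\ref{eq_UAV_polar_frame}) follows.

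For the tension, the first row of the buoy dynamics (\ref{eq_buoy_inertial_Newton})$-$(\ref{eq_buoy_inertial_forces}) has $G_{\mathrm{b},1}=0$ and $\tau_{\mathrm{b},1} = T c_\alpha$, so reading off the horizontal balance directly gives
\[
    T c_\alpha = M_{\mathrm{b},11}\ddot{x}_{\mathrm{b}} + M_{\mathrm{b},12}\ddot{z}_{\mathrm{b}} + C_{\mathrm{b},11}\dot{x}_{\mathrm{b}} + C_{\mathrm{b},12}\dot{z}_{\mathrm{b}} + D_{\mathrm{b},11} V_{\mathrm{r}} + D_{\mathrm{b},12}\tilde{\dot{\eta}}_{\mathrm{b},2},
\]
since $V_{\mathrm{r}} = \tilde{\dot{\eta}}_{\mathrm{b},1}$ by definition; dividing by $c_\alpha$ yields the first branch of (\ref{eq_cable_tension_estimation}), valid whenever $|\alpha - \pi/2| > \epsilon_\alpha$. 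When $\alpha$ lies within $\epsilon_\alpha$ of $\pi/2$, $c_\alpha \to 0$ and this expression becomes ill-conditioned, so I would switch to the second row of (\ref{eq_UAV_inertial_Newton}), $m_{\mathrm{u}} \ddot{z}_{\mathrm{u}} + m_{\mathrm{u}} \textsl{g} = u_1 c_{\theta_{\mathrm{u}}} - T s_\alpha$, which solves to the second branch with $s_\alpha$ bounded away from zero.

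The main obstacle is purely bookkeeping: keeping the gravity sign consistent with the paper's convention of placing $\mathbf{G}$ on the left-hand side of Newton's equations, aligning the counterclockwise convention for $\alpha$ inherited from $\text{atan2}$ with the clockwise convention for the body pitch $\theta_{\mathrm{u}}$, and verifying that the UAV aerodynamic drag terms (implicitly treated as negligible at the flight speeds of interest) do not re-enter the derived expressions. The conceptual content lies entirely in the taut-cable constraint, which removes a degree of freedom and turns $T$ from a dynamic unknown into an algebraic function of the states, explaining why two distinct closed-form expressions for $T$ must coexist and be switched between.
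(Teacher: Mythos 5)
Your proposal is correct and follows essentially the same route as the paper: you impose the taut-cable condition $r=l$ so that $\dot r=\ddot r=0$ in (\ref{eq_ddot_r_in_W_prime}), project the UAV's Newton equations (\ref{eq_UAV_inertial_Newton})--(\ref{eq_UAV_inertial_forces}) through $\mathbf{R}_\alpha^{\intercal}$ to obtain (\ref{eq_UAV_polar_frame}), and then read off the tension from the first row of the buoy dynamics (\ref{eq_buoy_inertial_Newton}) away from $\alpha=\pi/2$ and from the UAV's vertical equation near $\alpha=\pi/2$, exactly as the paper does. Your explicit verification of the tension cancellation on the tangential axis and the angle-sum identities is just a more detailed rendering of the paper's ``combine and project'' step, not a different argument.
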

\begin{proof}
    By differentiating $\mathbf{r}$ twice then multiplying it by $m_{\mathrm{u}}$, we get:
    \begin{equation} \label{eq_UAV_in_W_prime} 
        m_{\mathrm{u}} \ddot{\mathbf{r}}=m_{\mathrm{u}} \ddot{\mathbf{r}}_{\mathrm{u}} - m_{\mathrm{u}} \ddot{\mathbf{r}}_{\mathrm{b}}.
    \end{equation}
    Referring to Definition~\ref{def_taut_cable}, we must have $r(t)=l$ for the system to be coupled, that is $\dot{r}=\ddot{r}=0$. Thus, by referring to (\ref{eq_ddot_r_in_W_prime}), $\ddot{\mathbf{r}}$ reduces to the form:
    \begin{equation} \label{eq_ddot_r_in_W_prime_coupled} 
        \ddot{\mathbf{r}}=
        \mathbf{R}_{\alpha}
        \begin{bmatrix}
            - r \dot{\alpha}^2 \\
              r \ddot{\alpha}  \\
        \end{bmatrix}.
    \end{equation}
    Referring to (\ref{eq_UAV_inertial_Newton}), and
    combining it with (\ref{eq_UAV_in_W_prime}), then projecting along the radial and tangential directions of (\ref{eq_ddot_r_in_W_prime_coupled}) by means of $\mathbf{R}_{\alpha}^{\intercal}$, we can write the equations of motion of the UAV in $\mathcal{W}'$ in the polar coordinates notation as in (\ref{eq_UAV_polar_frame}).
    \par
The cable tension can be determined from the first row of the buoy dynamics in (\ref{eq_buoy_inertial_Newton}), so that its expression is more relevant to the coupled UAV$-$buoy system since it shows a direct link with $V_{\mathrm{r}}$, which yields the first case of (\ref{eq_cable_tension_estimation}).
%
However, this form is not applicable near the vertical cable configuration ($\alpha=\pi/2$) due to singularity, thus the actual cable tension, $T$, is computed via (\ref{eq_UAV_inertial_Newton}), which yields the second case of (\ref{eq_cable_tension_estimation}).
\end{proof}
%
\subsubsection{No Buoy-Hanging Constraint} \label{subsubsec_no_buoy_hanging_constraint}
The buoy is required to remain at the water surface level at all times, that is, the UAV must not lift the buoy into the air by means of the cable tension alone, as shown in Fig.~\ref{fig_constraints}b. This constraint can be forced by limiting the allowed cable tension by the following inequality, deduced from (\ref{eq_buoy_inertial_Newton}) and (\ref{eq_buoy_inertial_forces}) as:
\begin{equation} \label{eq_no_buoy_hanging_constraint} 
    T < m_{\mathrm{b}} \textsl{g} / s_{\alpha}.
\end{equation}
As noted in Remark~\ref{rem_buoy_density}, the buoy floats by itself, which means that no minimum cable tension is required to maintain the buoy at the water surface.
\subsubsection{No `Fly-Over' Constraint} \label{subsubsec_no_fly_over_constraint}
The buoy is required to remain in contact with the water surface at all times, that is, the UAV must not force it to `fly-over' the waves, as in Fig.~\ref{fig_constraints}c, when it encounters them within a specific frequency range. This constraint is described as:
\begin{equation} \label{eq_water_contact_condition} 
    \curlyvee_{\mathrm{im}} > 0,
\end{equation}
\noindent which guarantees keeping the buoy partially immersed at all times. 
`Fly-over' is a phenomenon that marks the flight of a planing hull over the waves level, thus losing contact with the water surface \cite{Fridsma1969PlaningBoats}. This phenomenon appears when the wave encounter frequency is near the resonant frequency of the hull, and is related to its Froude number \cite{Kim2013PlanningHull}.
\par
To detect the occurrence of this phenomenon, the following analysis is presented.
If the discontinuity in the buoyant force is neglected, the buoy's heave dynamics can be simplified and expressed as a second-order transfer function with natural frequency, $\omega_{b}$, and damping ratio, $\mu_{\mathrm{b}}$, deduced from (\ref{eq_buoy_inertial_Newton}) as:
\begin{equation} \label{eq_buoy_nat_freq_and_damp} 
        \omega_{\mathrm{b}} = \frac{\rho_{\mathrm{w}} \textsl{g} \mathrm{A}_{\mathrm{cs},3}^{\mathrm{b}}}{m_{\mathrm{b}}+a_{33}}, \qquad       
        \mu_{\mathrm{b}} = \frac{D_{\mathrm{b},33}}{2\sqrt{(m_{\mathrm{b}}+a_{33}) \rho_{\mathrm{w}} \textsl{g} \mathrm{A}_{\mathrm{cs},3}^{\mathrm{b}}}},
\end{equation}
\noindent where $\mathrm{A}_{\mathrm{cs},3}^{\mathrm{b}}$ is the mean horizontal cross-sectional area of the buoy at the water surface level. 
We also define $\omega_{\mathrm{e},n}$, the wave encounter frequency for the $n^{th}$ wave component, as \cite{faltinsen1990}:
\begin{equation} \label{eq_wave_encounter_freq} 
    \omega_{\mathrm{e},n} = \omega_n - d_n \frac{\omega_n^2 V}{\textsl{g}}, \quad n \in S_n.
\end{equation}
The `fly-over' phenomenon occurs at an exciting frequency where the increase in oscillations amplitude due to dynamic magnification, $\Delta h_{\text{amp}}$, exceeds the mean immersed height of the buoy, $\bar{h}_{\mathrm{im}}$, that is:
\begin{equation} \label{eq_fly_over_constraint} 
    \Delta h_{\text{amp}}:= \sum_{n}^{N} A_n \Big(\frac{1}{\sqrt{\big(1 -\bar{\omega}_n^2 \big)^2+(2 \mu_{\mathrm{b}} \bar{\omega}_n)^2}} - 1 \Big) > \bar{h}_{\mathrm{im}},
\end{equation}
\noindent where $\bar{\omega}_n=\omega_{\mathrm{e},n}/\omega_{\mathrm{b}}$.
Further elaboration on the implications of this condition on the system modeling and performance requires knowledge of the buoy characteristics in terms of shape and weight, as well as wave characteristics in terms of height and wave length, which are presented in Section \ref{sec_simulation}.
\par
%
\subsection{The Tethered UAV-Buoy System Model}
The formulation of the tethered UAV$-$buoy system, in its coupled form, is obtained via the Euler-Lagrange formulation, while incorporating the results of Sections \ref{subsec_buoy_model}, \ref{subsec_UAV_model}, and \ref{subsec_system_constraints}.
The Lagrangian function is obtained from the kinetic ($\mathcal{K}(\bm{q},\dot{\bm{q}}) \in \mathbb{R}_{\geq0}$) and potential ($\mathcal{U}(\bm{q}) \in \mathbb{R}$) energies as $\mathcal{L}(\bm{q},\dot{\bm{q}})=\mathcal{K}(\bm{q},\dot{\bm{q}})+\mathcal{U}(\bm{q})$,
where $\bm{q}=[x_{\mathrm{b}},z_{\mathrm{b}},\alpha,\theta_{\mathrm{u}},\theta_{\mathrm{b}}]^{\intercal} \in \mathbb{R}^5$ is the generalized coordinates vector.
The motion equations of the UAV$-$buoy system can then be derived as:
\begin{equation} \label{eq_Euler_Lagrange} 
        \frac{d}{dt}\Big(\frac{\partial \mathcal{L}}{\partial \dot{\bm{q}}}\Big)-\frac{\partial \mathcal{L}}{\partial \bm{q}}+\frac{\partial \mathcal{P}}{\partial \dot{\bm{q}}}=\bm{\tau},
\end{equation}
\noindent where $\bm{\tau} \in \mathbb{R}^{5}$ is the external forces vector; $\mathcal{P} \in \mathbb{R}$ is a power function that captures dissipative forces, such that $\frac{\partial \mathcal{P}}{\partial \dot{\bm{q}}}:=\mathbf{D} \tilde{\dot{\bm{q}}}$, where $\mathbf{D}$ is the global damping matrix that can be formulated based on (\ref{eq_buoy_body_to_inertial_trans}) without including a wind-induced component per Assumption~\ref{assump_neglected_air_friction}; and $\tilde{\dot{\bm{q}}}$ is defined as:
\begin{equation} \label{eq_configuration_variables_global} 
    \tilde{\dot{\bm{q}}}=[V_{\mathrm{r}},\dot{z}_{\mathrm{b}}-v_z^w,\dot{\alpha},\dot{\theta}_{\mathrm{u}},\dot{\theta}_{\mathrm{b}}]^{\intercal}. 
\end{equation}
\par
To facilitate the derivation of the Euler-Lagrange equations, the kinetic energy of the system is expressed as the sum of that of the buoy and that of the UAV:
\begin{equation} \label{eq_kinetic_energy} 
    \mathcal{K} = \frac{1}{2} \dot{\bm{q}}^{\intercal} \mathbf{M} \dot{\bm{q}} := \frac{1}{2} \dot{\bm{\eta}}_{\mathrm{b}}^{\intercal} \mathbf{M}_{\mathrm{b}} \dot{\bm{\eta}}_{\mathrm{b}} + \frac{1}{2} \dot{\bm{\eta}}_{\mathrm{u}}^{\intercal} \mathbf{M}_{\mathrm{u}} \dot{\bm{\eta}}_{\mathrm{u}},
\end{equation}
\noindent where $\mathbf{M}$ is the global inertia matrix of the UAV$-$buoy system, which can be formulated by referring to (\ref{eq_UAV_vel_from_alpha}) and using the elements of $\mathbf{M}_{\mathrm{b}}$ and $\mathbf{M}_{\mathrm{u}}$.
The system's potential energy and external forces and moments vector can be formulated based on (\ref{eq_buoy_inertial_forces}) and (\ref{eq_UAV_inertial_forces}) as:
\begin{equation} \label{eq_Euler_Lagrange_potention_forces} 
    \begin{split}
        \mathcal{U} & = m_{\mathrm{u}} \textsl{g} (z_{\mathrm{b}}+l s_{\alpha}) + m_{\mathrm{b}} \textsl{g} \,z_{\mathrm{b}},\\
        \bm{\tau} & = [u_1 s_{\theta_{\mathrm{u}}}, u_1 c_{\theta_{\mathrm{u}}} + \rho_{\mathrm{w}} \textsl{g} \curlyvee_{\mathrm{im}}, u_1 l c_{\alpha+\theta_{\mathrm{u}}},u_2,f_{\mathrm{p}}s_{\theta_{\mathrm{u}}}]^{\intercal}.\\
    \end{split}
\end{equation}
\par
Finally, the following equations of motion that result from Euler-Lagrange formulation (\ref{eq_Euler_Lagrange}) are obtained:
\begin{equation} \label{eq_dynamic_model_compact} 
    \mathbf{M} \ddot{\bm{q}} + \mathbf{C}\dot{\bm{q}} + \mathbf{D}\tilde{\dot{\bm{q}}} + \mathbf{G} = \bm{\tau},
\end{equation}
\noindent where $\mathbf{C} \dot{\bm{q}}:= \frac{1}{2} \dot{\mathbf{M}} \dot{\bm{q}}$ is the Coriolis matrix with $\dot{\mathbf{M}} = \dot{\bm{q}}^{\intercal} \frac{\partial \mathbf{M}}{\partial \bm{q}}$, and
the global vector of gravity forces $\mathbf{G}$ is:
\begin{equation} \label{eq_potential_forces} 
    \mathbf{G}:=\frac{\partial \mathcal{U}}{\partial \bm{q}}=[0,(m_{\mathrm{b}}+m_{\mathrm{u}}) \textsl{g},m_{\mathrm{u}} \,\textsl{g}\, l c_{\alpha},0,0]^{\intercal}.
\end{equation}
\begin{assumption} \label{assump_stable_buoy_rot_dyn}
    The buoy's pitch dynamics are damped and stable, that is: $D_{\mathrm{b},33} \neq 0$ and $f_{\mathrm{p}}<0$. As a result, the buoy is assumed to remain tangent to the water surface at all times.
\end{assumption} \par
With Assumption \ref{assump_stable_buoy_rot_dyn} and the dominance of waves with relatively long wave period and moderate wave height, the time derivative of the buoy pitch angle, $\dot{\theta}_{\mathrm{b}}$, is small and thus its effect can be neglected in $\dot{\mathbf{M}}$, which yields a Coriolis matrix that is a function of $\alpha$ only.
\par
With constraints (\ref{eq_taut_cable_condition}) and (\ref{eq_water_contact_condition}) satisfied, the dynamic model equations in the coupled form are given by:
\begin{subequations}
\label{eq_dynamic_model_expended}
\begin{align}
     \begin{split}
        & (M_{\mathrm{b},11}+m_{\mathrm{u}})\ddot{x}_{\mathrm{b}} + M_{\mathrm{b},12}\ddot{z}_{\mathrm{b}}  +  D_{\mathrm{b},11} V_{\mathrm{r}} + D_{\mathrm{b},12} \tilde{\dot{z}}_{\mathrm{b}} \\
        & \qquad - m_{\mathrm{u}} l(c_{\alpha}\dot{\alpha}^2 + s_{\alpha}\ddot{\alpha}) = u_1 s_{\theta_{\mathrm{u}}}, 
    \end{split} 
   \label{eq_dynamic_model_expended_a} \\
    %
    \begin{split}
        & (M_{\mathrm{b},22}+m_{\mathrm{u}})\ddot{z}_{\mathrm{b}} + M_{\mathrm{b},21}\ddot{x}_{\mathrm{b}} - m_{\mathrm{u}} l(s_{\alpha}\dot{\alpha}^2 - c_{\alpha}\ddot{\alpha}) \\
        & + D_{\mathrm{b},22} \tilde{\dot{z}}_{\mathrm{b}} + D_{\mathrm{b},21} V_{\mathrm{r}} + (m_{\mathrm{b}}+m_{\mathrm{u}})\textsl{g} = u_1 c_{\theta_{\mathrm{u}}} \\
        & + (\rho_{\mathrm{w}} \curlyvee_{\mathrm{im}}) \textsl{g},  
    \end{split}   \label{eq_dynamic_model_expended_b} \\
    & m_{\mathrm{u}} l^2 \ddot{\alpha} + m_{\mathrm{u}} l( -s_{\alpha}\ddot{x}_{\mathrm{b}} + c_{\alpha}\ddot{z}_{\mathrm{b}}) + m_{\mathrm{u}} \textsl{g}(l c_{\alpha}) \\
    & = u_1 l c_{\alpha+\theta_{\mathrm{u}}}, 
\label{eq_dynamic_model_expended_c} \\
    & J_{\mathrm{u}} \ddot{\theta}_{\mathrm{u}} = u_2, \label{eq_dynamic_model_expended_d} \\
    & M_{\mathrm{b},33} \ddot{\theta}_{\mathrm{b}} +D_{\mathrm{b},33} \dot{\theta}_{\mathrm{b}} = f_{\mathrm{p}} s_{\theta_{\mathrm{b}}}, \label{eq_dynamic_model_expended_e}
\end{align} 
\end{subequations}
\noindent where $\tilde{\dot{z}}_{\mathrm{b}}:=\tilde{\dot{\eta}}_{\mathrm{b},2}$.
The UAV's position and velocity vectors can then be obtained from (\ref{eq_UAV_pos_from_alpha}) and (\ref{eq_UAV_vel_from_alpha}), respectively.
\begin{remark} \label{remark_unsatisfied_conditions}
    If the taut-cable constraint (\ref{eq_taut_cable_condition}) is not satisfied, the system in (\ref{eq_dynamic_model_expended}) decouples into (\ref{eq_buoy_inertial_Newton}) and (\ref{eq_UAV_inertial_Newton}) with $T=0$, and the polar states $\mathbf{r}'$, $\dot{\mathbf{r}}'$, and $\ddot{\mathbf{r}}'$ are calculated from (\ref{eq_UAV_polar_coordinates}), (\ref{eq_dot_r_in_W_prime}), and (\ref{eq_ddot_r_in_W_prime}), respectively. 
    On the other hand, if the fly-over constraint (\ref{eq_water_contact_condition}) is not satisfied, the buoy's inertia matrix in (\ref{eq_buoy_inertia_matrix}) reduces to $\mathbf{D}_{\mathrm{b}}=diag(m_{\mathrm{b}},m_{\mathrm{b}},J_{\mathrm{b}})$, the buoy's damping matrix $\mathbf{D}_{\mathrm{b}}$ in (\ref{eq_buoy_damping_matrix}) reduces to a null matrix, and $f_{\mathrm{p}}$ becomes zero.
\end{remark}
\par

\section{Control System Design} \label{sec_controller_design} 
The control system design problem is defined as manipulating the surge velocity of the buoy, $V$, to track a desired reference and to maintain the UAV's elevation, $z_{\mathrm{u}}$, at a constant level, while ensuring that the dynamics of the UAV$-$buoy system remain stable and contact between the buoy and water is maintained.
%
\par
%
\subsection{Attainable Setpoints} 
%
The control objective is to attain a steady-state mean velocity of the buoy, ($\bar{V}$), and mean UAV's elevation, ($\bar{z}_{\mathrm{u}}$), such that $lim_{t \rightarrow \infty} (\frac{1}{t} \int {z_{\mathrm{u}}(t) dt}, \frac{1}{t} \int{V(t) dt})=(\bar{z}_{\mathrm{u}},\bar{V})$.
Next, we seek to find the set of other system states, namely, $\bar{\theta}_{\mathrm{u}}$ and $\bar{\curlyvee}_{\mathrm{im}}$, and control inputs, $\bar{u}_1$ and $\bar{u}_2$, that will achieve the control objective.
Other nonzero mean system variables in a steady-state surge motion are: $\bar{T}$, $\bar{D}_{\mathrm{b},11}$, and $\bar{D}_{\mathrm{b},21}$. Note that the bar sign ($\bar{\bigcdot}$) refers to the mean values of the variables at equilibrium, $\bar{\curlyvee}_{\mathrm{im}}$ implicitly represents $\bar{z}_\mathrm{b}$, and the buoy's pitch angle, $\theta_{\mathrm{b}}$, is not considered in the setpoint analysis per Assumption~\ref{assump_stable_buoy_rot_dyn}.
\par
\begin{definition} \label{def_control_problem}
    Under specific sea conditions, namely $U_{\mathrm{c}}$ and $\zeta(A_n,\omega_n)$ with $n \in S_n$, 
    and certain safety margins $\epsilon_{T}(U_{\mathrm{c}},\zeta) \geq 0$ for the cable's tension, which guarantees the \textit{coupled} state of the system; and $\epsilon_{\curlyvee} \in (0,1)$ for the buoy's immersed volume, to ensure a minimum buoy immersion that is suitable to the desired system application;
    the set of admissible configurations consists of the equilibrium points $(\bar{V},\bar{\curlyvee}_{\mathrm{im}},\bar{\alpha},\bar{\theta}_{\mathrm{u}})$, such that:
    \begin{equation}     \label{eq_safety_margins} 
            \bar{T}:=\bar{T}(\bar{V}_{\mathrm{r}},\bar{\alpha}) > \epsilon_{T},\qquad
            \bar{\curlyvee}_{\mathrm{im}}:=\bar{\curlyvee}_{\mathrm{im}}(\bar{V}_{\mathrm{r}},\bar{\alpha}) > \epsilon_{\curlyvee} \curlyvee_{\mathrm{b}},
    \end{equation}   
    \noindent where $\bar{V}_{\mathrm{r}}= \bar{V} - U_{\mathrm{c}}$. 
\end{definition}
\begin{assumption} \label{assump_equilibrium_NO_wave}
    The equilibrium state is analyzed under the no-wave condition: $A_n=0$ with $n \in S_n$, that is, $v_x^w = v_z^w = 0$.
\end{assumption}
\begin{theorem}
    Consider the system described in (\ref{eq_dynamic_model_expended}), subject to constraints (\ref{eq_taut_cable_condition}) and (\ref{eq_water_contact_condition}), and the margins specified in (\ref{eq_safety_margins}); by Assumptions \ref{assump_stable_buoy_rot_dyn} and \ref{assump_equilibrium_NO_wave}, the set of attainable equilibrium states is the union of all $(\bar{V},\bar{\curlyvee}_{\mathrm{im}},\bar{\alpha},\bar{\theta}_{\mathrm{u}})$ that satisfy: 
    \begin{equation} \label{eq_theta_bar} 
        \bar{\theta}_{\mathrm{u}}(\bar{V}_{\mathrm{r}},\bar{\alpha}) = \mathrm{atan} \Big(\frac{\bar{D}_{\mathrm{b},11}  \bar{V}_{\mathrm{r}} c_{\bar{\alpha}}}{m_{\mathrm{u}} \,\textsl{g}\, c_{\bar{\alpha}} + \bar{D}_{\mathrm{b},11} \bar{V}_{\mathrm{r}} s_{\bar{\alpha}}} \Big),
    \end{equation}
    %
    \noindent and the steady-state thrust value and immersed volume are calculated as:
    \begin{subequations}
    \label{eq_u1_bar_Vol_bar}
        \begin{align}
            & \bar{u}_1 = 
            \begin{cases}
                \mathrm{any} \: \mathbb{R}_{>0}, &  \quad \mathrm{if} \; \bar{\alpha} = \frac{\pi}{2} \\
                \bar{D}_{\mathrm{b},11} \bar{V}_{\mathrm{r}} / s_{\bar{\theta}_{\mathrm{u}}}, & \quad  \mathrm{otherwise},
            \end{cases}
        \label{eq_u1_bar_Vol_bar_a}\\
            & \bar{\curlyvee}_{\mathrm{im}} =
            \begin{cases}
                \frac{m_{\mathrm{b}} + m_{\mathrm{u}}} {\rho_{\mathrm{w}}} - \frac{\bar{u}_1}{\rho_\mathrm{w} \textsl{g}}, & \quad \mathrm{if} \;  \bar{\alpha} = \frac{\pi}{2} \\
                \frac{m_{\mathrm{b}}}{\rho_{\mathrm{w}}} - \frac{ \bar{V}_{\mathrm{r}} } {\rho_{\mathrm{w}} \textsl{g}} \Big(  \bar{D}_{\mathrm{b},11} t_{\bar{\alpha}} - \bar{D}_{\mathrm{b},21}  \Big), & \quad \mathrm{otherwise}.
            \end{cases} \label{eq_u1_bar_Vol_bar_b}
        \end{align}   
    \end{subequations}               
Given $\bar{\curlyvee}_{\mathrm{im}}$, we can solve for $\bar{z}_{\mathrm{b}}$ per specific buoy geometry. 
In addition, the cable tension at equilibrium is a function of $\bar{V}_{\mathrm{r}}$ and $\bar{\alpha}$, and expressed as:
    \begin{equation} \label{eq_cable_tension_equilibrium} 
        \bar{T} = \begin{cases}
            \mathrm{any} \: \mathbb{R}_{>0}, & \quad \mathrm{if} \; \bar{\alpha} = \frac{\pi}{2} \\
            \bar{D}_{\mathrm{b},11} \bar{V}_{\mathrm{r}} / c_{\bar{\alpha}}, & \quad \mathrm{otherwise}. \\
    \end{cases}
    \end{equation}
\end{theorem}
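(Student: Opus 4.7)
The plan is to impose the equilibrium conditions on the coupled dynamic model (\ref{eq_dynamic_model_expended}) and solve the resulting algebraic system for $(\bar{u}_1,\bar{\theta}_{\mathrm{u}},\bar{\curlyvee}_{\mathrm{im}},\bar{T})$ in terms of the chosen setpoints $(\bar{V},\bar{\alpha})$. First I would translate ``equilibrium'' into: $\ddot{x}_{\mathrm{b}}=0$ with $V=\dot{x}_{\mathrm{b}}$ constant, $\dot{z}_{\mathrm{b}}=\ddot{z}_{\mathrm{b}}=0$, $\dot{\alpha}=\ddot{\alpha}=0$, and $\dot{\theta}_{\mathrm{u}}=\ddot{\theta}_{\mathrm{u}}=0$. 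Assumption~\ref{assump_equilibrium_NO_wave} sets $v^w_x=v^w_z=0$, so $\tilde{\dot{z}}_{\mathrm{b}}=0$ and $\bar{V}_{\mathrm{r}}=\bar{V}-U_{\mathrm{c}}$. Equation (\ref{eq_dynamic_model_expended_d}) instantly yields $\bar{u}_2=0$, while (\ref{eq_dynamic_model_expended_e}) combined with Assumption~\ref{assump_stable_buoy_rot_dyn} selects the tangent-to-surface root $\bar{\theta}_{\mathrm{b}}=0$.

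Next I would reduce the remaining three equations to an algebraic system in which (\ref{eq_dynamic_model_expended_a}) balances the surge drag $\bar{D}_{\mathrm{b},11}\bar{V}_{\mathrm{r}}$ against the horizontal thrust component $\bar{u}_1 s_{\bar{\theta}_{\mathrm{u}}}$; (\ref{eq_dynamic_model_expended_b}) equates the combined weight to the vertical thrust plus the buoyancy $\rho_{\mathrm{w}}\bar{\curlyvee}_{\mathrm{im}}\textsl{g}$; and (\ref{eq_dynamic_model_expended_c}), after trigonometric expansion of $c_{\bar{\alpha}+\bar{\theta}_{\mathrm{u}}}$, links the thrust direction to the cable elevation. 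For the generic case $\bar{\alpha}\neq \pi/2$, I would substitute (\ref{eq_dynamic_model_expended_a}) into the expanded (\ref{eq_dynamic_model_expended_c}) to eliminate $s_{\bar{\theta}_{\mathrm{u}}}$, then take the ratio with (\ref{eq_dynamic_model_expended_a}) to isolate $\tan(\bar{\theta}_{\mathrm{u}})$, producing (\ref{eq_theta_bar}); back-substitution into (\ref{eq_dynamic_model_expended_a}) yields $\bar{u}_1$, and eliminating $\bar{u}_1 c_{\bar{\theta}_{\mathrm{u}}}$ from (\ref{eq_dynamic_model_expended_b}) gives $\bar{\curlyvee}_{\mathrm{im}}$. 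The singular case $\bar{\alpha}=\pi/2$ requires a separate argument: (\ref{eq_dynamic_model_expended_c}) collapses to $\bar{u}_1 s_{\bar{\theta}_{\mathrm{u}}}=0$, so via (\ref{eq_dynamic_model_expended_a}) we obtain $\bar{V}_{\mathrm{r}}=0$ and $\bar{\theta}_{\mathrm{u}}=0$, leaving $\bar{u}_1$ a free parameter whose value partitions the total weight between vertical thrust and buoyancy in (\ref{eq_dynamic_model_expended_b}).

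Finally, I would compute $\bar{T}$ by invoking Lemma~\ref{lemma_cable_tension}. For $\bar{\alpha}\neq \pi/2$ I would use the first branch of (\ref{eq_cable_tension_estimation}), noting that $\mathbf{M}_{\mathrm{b}}$ depends only on $\theta_{\mathrm{b}}$, so the definition $\mathbf{C}_{\mathrm{b}}\dot{\bm{\eta}}_{\mathrm{b}}=\tfrac{1}{2}\dot{\mathbf{M}}_{\mathrm{b}}\dot{\bm{\eta}}_{\mathrm{b}}$ makes every $C_{\mathrm{b},ij}$ vanish whenever $\dot{\theta}_{\mathrm{b}}=0$. Combined with zero accelerations and $\tilde{\dot{z}}_{\mathrm{b}}=0$, only the $\bar{D}_{\mathrm{b},11}\bar{V}_{\mathrm{r}}/c_{\bar{\alpha}}$ term survives, giving the non-singular branch of (\ref{eq_cable_tension_equilibrium}). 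For $\bar{\alpha}=\pi/2$ the second branch of (\ref{eq_cable_tension_estimation}) reduces to $\bar{T}=\bar{u}_1-m_{\mathrm{u}}\textsl{g}$, which sweeps $\mathbb{R}_{>0}$ as $\bar{u}_1$ varies, and the admissibility margins (\ref{eq_safety_margins}) then carve out the attainable subset.

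The main obstacle I anticipate is the clean treatment of the $\bar{\alpha}=\pi/2$ branch: one must justify why this configuration forms an equilibrium \emph{set} rather than an isolated point, explaining that with a vertical cable there is no horizontal drag to balance, so the system can only hover in place while freely distributing load between thrust and buoyancy. Showing explicitly that every Coriolis contribution drops out of (\ref{eq_cable_tension_estimation}) at steady state — so that $\bar{T}$ reduces to a pure drag expression — is the other subtle step that must be argued carefully before the remaining algebra becomes routine.
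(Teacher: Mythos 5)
Your proposal is correct and follows essentially the same route as the paper: impose the equilibrium conditions on (\ref{eq_dynamic_model_expended}) under Assumptions \ref{assump_stable_buoy_rot_dyn} and \ref{assump_equilibrium_NO_wave}, reduce to the three steady-state balance equations, obtain $\bar{\theta}_{\mathrm{u}}$ from the ratio of the surge and elevation-angle equations, back-substitute to get $\bar{u}_1$ and $\bar{\curlyvee}_{\mathrm{im}}$, recover $\bar{T}$ from (\ref{eq_cable_tension_estimation}) with the zero-valued states (including the Coriolis terms, since $\dot{\theta}_{\mathrm{b}}=0$) cancelled, and treat $\bar{\alpha}=\pi/2$ separately where $\bar{V}_{\mathrm{r}}=\bar{\theta}_{\mathrm{u}}=0$ and $\bar{u}_1$ remains a free parameter. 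Your explicit handling of the vertical-cable tension via the second branch of (\ref{eq_cable_tension_estimation}) is a slightly more detailed justification than the paper gives, but the substance is identical.
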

\begin{proof}
    The dynamic equilibrium of system (\ref{eq_dynamic_model_expended}) is attained when $\ddot{x}_{\mathrm{b}}=\ddot{z}_{\mathrm{b}}=\ddot{\theta}_{\mathrm{u}}=\dot{\theta}_{\mathrm{u}}=\ddot{\theta}_{\mathrm{b}}=\dot{\theta}_{\mathrm{b}}=\ddot{\alpha}=\dot{\alpha}=0$, and since we are considering surface motion of the buoy along with Assumption \ref{assump_equilibrium_NO_wave}, we additionally have $\dot{z}_{\mathrm{b}}=0$. Thus, we conclude that $u_1 = \bar{u}_1$, and $u_2 = \bar{u}_2 :=0$, and by substituting in (\ref{eq_dynamic_model_expended}), we get:
    \begin{subequations}
    \label{eq_steady_state_surge}
        \begin{align}
            & \bar{D}_{\mathrm{b},11} \bar{V}_{\mathrm{r}} - \bar{u}_1 s_{\bar{\theta}_{\mathrm{u}}} = 0, 
            \label{eq_steady_state_surge_a}\\
            & \bar{D}_{\mathrm{b},21} \bar{V}_{\mathrm{r}} + (m_{\mathrm{b}}+m_{\mathrm{u}})\textsl{g} - \bar{u}_1 c_{\bar{\theta}_{\mathrm{u}}} - (\rho_{\mathrm{w}} \bar{\curlyvee}_{\mathrm{im}}) \textsl{g} = 0, \label{eq_steady_state_surge_b}\\
            & m_{\mathrm{u}} \textsl{g} c_{\bar{\alpha}} - \bar{u}_1 c_{\bar{\alpha} + \bar{\theta}_{\mathrm{u}}} = 0. \label{eq_steady_state_surge_c}
        \end{align}
    \end{subequations}
    \noindent $\bar{\theta}_{\mathrm{u}} (\bar{V}_{\mathrm{r}},\bar{\alpha})$ in (\ref{eq_theta_bar}) is obtained by rearranging and dividing (\ref{eq_steady_state_surge_a}) by (\ref{eq_steady_state_surge_c}); in the case when $\bar{\alpha} \neq \frac{\pi}{2}$, $\bar{u}_1$ can be subsequently obtained from (\ref{eq_steady_state_surge_a}). As for $\bar{\curlyvee}_{\mathrm{im}}$, it can be obtained after substituting for $\bar{u}_1$ from (\ref{eq_u1_bar_Vol_bar_a}) and for $\bar{\theta}_{\mathrm{u}}$ from (\ref{eq_theta_bar}) in (\ref{eq_steady_state_surge_b}). Finally, the cable tension at equilibrium can be obtained from (\ref{eq_cable_tension_estimation}) by canceling the zero-valued states.
    Note that when $\bar{\alpha} = \frac{\pi}{2}$, the system of equations (\ref{eq_steady_state_surge}) has a solution only if $\bar{V}_{\mathrm{r}}=\bar{\theta}_{\mathrm{u}}=0$, while $\bar{u}_1$ can be any $\mathbb{R}_{>0}$ that respects the system constraints, and can be chosen to manipulate $\bar{\curlyvee}_{\mathrm{im}}$ based on the first case of (\ref{eq_u1_bar_Vol_bar_b}).
\end{proof}
\par
Now we seek to define the set of possible attainable steady-state velocities, $S_{\bar{V}}$, under Assumption~\ref{assump_equilibrium_NO_wave}, that satisfy the safety margins specified in (\ref{eq_safety_margins}).
The cable tension at equilibrium, $\bar{T}$, can be obtained from (\ref{eq_cable_tension_equilibrium}); then we can determine the minimum absolute surface velocity, $\bar{V}$, under a specific sea state, i.e. current and waves, that guarantees the taut-cable condition.
In addition, the maximum absolute limit of $S_{\bar{V}}$ is attained from (\ref{eq_safety_margins}) and (\ref{eq_u1_bar_Vol_bar_b}).
Finally, we get $S_{\bar{V}} = S_{\bar{V}\mathrm{n}} \cup S_{\bar{V}\mathrm{p}} $, such that:
\begin{equation} \label{eq_valid_vel_set} 
    \begin{split}
        S_{\bar{V}\mathrm{p}} & = \Big(\frac{\epsilon_{T} c_{\bar{\alpha}}}{ \bar{D}_{\mathrm{b},11} }  + U_{\mathrm{c}}, 
        \frac{(m_{\mathrm{b}}+m_{\mathrm{u}} - \epsilon_{\curlyvee} \rho_{\mathrm{w}}) \textsl{g} \, t_{\bar{\theta}_{\mathrm{u}}}}{\bar{D}_{\mathrm{b},11}} + U_{\mathrm{c}} \Big), \\
        & \qquad\qquad\qquad\qquad\qquad\qquad\qquad \text{if} \; \bar{\alpha} \leq \frac{\pi}{2} \\
        S_{\bar{V}\mathrm{n}} & = \Big(
        \frac{(m_{\mathrm{b}}+m_{\mathrm{u}} - \epsilon_{\curlyvee} \rho_{\mathrm{w}}) \textsl{g} \, t_{\bar{\theta}_{\mathrm{u}}}}{\bar{D}_{\mathrm{b},11}} + U_{\mathrm{c}},
        \frac{\epsilon_{T} c_{\bar{\alpha}}}{ \bar{D}_{\mathrm{b},11} }  + U_{\mathrm{c}} \Big), \\
        & \qquad\qquad\qquad\qquad\qquad\qquad\qquad \text{if} \; \bar{\alpha} >  \frac{\pi}{2}. \\
    \end{split}
\end{equation} 
\begin{remark} \label{rem_other_velocity_bounds}
    In practice, the maximum attainable absolute velocity can be limited by the UAV's maximum thrust, which can be derived from (\ref{eq_theta_bar}) and (\ref{eq_u1_bar_Vol_bar_a}), and the tether's yield strength. It is also noted that the motion across waves of various characteristics may alter the velocity bounds as to be discussed in Section~\ref{subsec_velocity_bounds}. In addition, the violation of the buoy's velocity upper bound can be alternatively prevented by referring to constraint (\ref{eq_no_buoy_hanging_constraint}) and the cable tension calculation in (\ref{eq_cable_tension_estimation}), and limiting the UAV's maximum thrust such that:
    \begin{equation} \label{eq_uav_thrust_constraint} 
        u_{1} < \frac{m_{\mathrm{b}}(1-\epsilon_{m}) \textsl{g}}{t_{\alpha} s_{\theta_{\mathrm{u}}}},
    \end{equation} 
    \noindent where $\epsilon_{m} \in [0,1)$ is a safety margin that represents a fraction of the buoy's mass, and accounts for the unmodeled dynamic forces affecting the buoy's heave motion that might violate the system constraints in (\ref{eq_no_buoy_hanging_constraint}) and (\ref{eq_water_contact_condition}).
\end{remark}
%
\subsection{Operational Modes and State Machine}
To achieve acceleration and deceleration motions,
the UAV$-$buoy system is required to manipulate the cable tension, switch between coupled and decoupled states, and achieve bidirectional velocity control; hence, the UAV must change its positioning with respect to the buoy back and forth.
Thus, the locomotive UAV control system is to be designed to operate in both position control and velocity control modes, which necessitates the use of a state machine to achieve an autonomous performance of the UAV$-$buoy system.
Note that a cable can only transmit tensile forces, thus allowing only pulling actions.
\par
Next, we provide the required definitions to describe the system states, present a complete cycle of the system's operational states to achieve the control objectives, and we introduce a state machine that allows the execution of appropriate commands. 
\subsubsection{Operational Modes} 
\begin{definition} \label{def_operational_modes}
The UAV's location with respect to the buoy is assigned one of the following two configurations:
    \begin{itemize}
        \item We call `\textit{front}' the configuration at which the UAV is positioned to the front of the buoy, i.e. $\alpha \in (0,\frac{\pi}{2})$.
        \item We call `\textit{rear}' the configuration at which the UAV is positioned to the rear of the buoy, i.e. $\alpha \in (\frac{\pi}{2},\pi)$.
    \end{itemize}
\end{definition}
We let $\bar{r}$ be the reference radial position of the UAV with respect to the buoy. The UAV$-$buoy system can be in one out of four operational modes shown in Fig.~\ref{fig_all_flying_modes}:
    \begin{enumerate}[label=(\alph*)]
        \item We call `\textit{free}' the mode during which the UAV is allowed to move freely around the buoy, while $r<l$. 
        \item We call `\textit{ready to pull}' the mode during which the UAV is commanded to maintain a specific elevation ($\bar{z}_{\mathrm{u}}$), and a reference standby radius, $r_{\mathrm{sb}}$, which is slightly less than the cable length $l$ to consume any cable slack. The elevation angle is $\alpha_0$ if the configuration is `front' and ($\pi - \alpha_0$) if the configuration is `rear'.
        \item We call `\textit{repositioning}' the mode during which the UAV moves from one side of the buoy to the other (fore/aft), travels a total arc of ($\pi-2 \alpha_0$), while maintaining a constant reference radius with respect to the buoy, $\bar{r}=r_{\mathrm{sb}}$, until it returns to the initial elevation, $\bar{z}_{\mathrm{u}}$.
        \item We call `\textit{pulling}' the mode during which the UAV is performing a pulling action on the buoy with a reference elevation, $\bar{z}_{\mathrm{u}}$, and radius, $r=l$. The resulting elevation angle is $\alpha_0'$ if the configuration is `front' and ($\pi - \alpha_0'$) if the configuration is `rear'.
    \end{enumerate}
\par
%
\begin{figure}
\centerline{\includegraphics[width=3.45in]{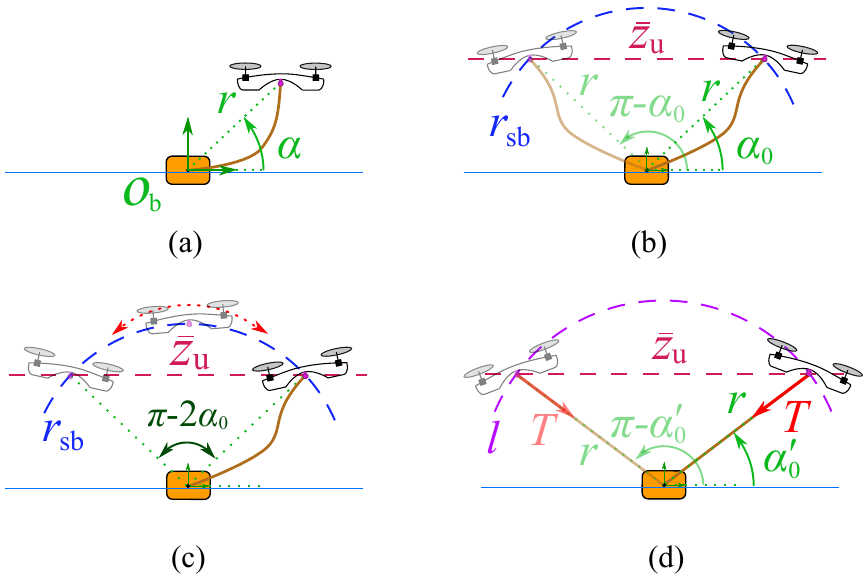}} 
\caption{UAV$-$buoy system operational states in the locomotion task: (a) free UAV motion around the buoy within the cable limit, used in initializing the system, (b) ready to pull forward (or backward), the UAV is in the right position to generate tension in the cable when asked to do so, (c) switching UAV's positioning between front and rear, while following the trajectory marked in dashed blue to avoid cable entanglement, and (d) coupled and pulling forward (or backward) to manipulate the buoy surge velocity.}
\label{fig_all_flying_modes}
\end{figure}
%
%
\par
\subsubsection{State Machine} \label{sub_sec_state_machine}
The UAV$-$buoy system is supervised by a state machine that governs switching between different control modes and commanded actions.
Fig.~\ref{fig_state_switching_diagram} illustrates a typical velocity profile, with a hypothetical tracking performance, and threshold lines that govern the state machine actions to showcase the state switching mechanism.
Let the threshold levels be denoted by $\mathrm{p}_1$ and $\mathrm{p}_2$ for the top two lines, and $\mathrm{n}_1$ and $\mathrm{n}_2$ for the bottom two lines. 
The first and second velocity error thresholds are denoted by $\epsilon_{\mathrm{th}1}$ and $\epsilon_{\mathrm{th}2}$ respectively.
\par
The proposed state machine benefits from the threshold velocity lines to choose the suitable mode of action as described in Definition~\ref{def_operational_modes}, in a way that respects the system dynamics and insures the system safety \cite{Tognon2020AerialRobotsTethers}, with a pseudo-code provided in Algorithm~\ref{alg_state_machine}.
\par
\begin{figure}
\centerline{\includegraphics[width=3.45 in]{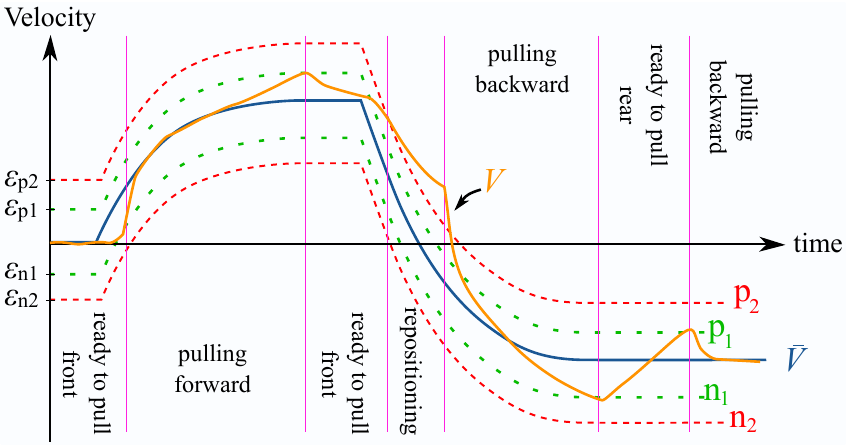}} 
\caption{Demonstrative diagram showing the modes' transition behavior and the buoy's velocity tracking performance during a theoretical scenario. The doted boundary lines govern the actions of the state machine.}
\label{fig_state_switching_diagram}
\end{figure}
%
\begin{figure}
 \begin{algorithm}[H] 
 \caption{State machine for the locomotive UAV's control system}
 \begin{algorithmic}[1] \label{alg_state_machine}
 \renewcommand{\algorithmicrequire}{\textbf{Input:}}
 \renewcommand{\algorithmicensure}{\textbf{Output:}}
 \REQUIRE $V$, $\bar{V}$, \textit{configuration}
 \ENSURE  \textit{MODE}
  \\ \textit{Initialization} :
  \\ \textit{MODE} $\Leftarrow$ `free'
  \\ \textit{LOOP Process}
  \IF {\quad\;\;\,($V < \bar{V} - \epsilon_{\mathrm{th1}}$) \AND \textit{configuration} $==$ `front'}
  \STATE \textit{MODE} $\Leftarrow$ `pulling'
  \ELSIF{($V > \bar{V} + \epsilon_{\mathrm{th1}}$) \AND \textit{configuration} $==$ `front'}
  \STATE \textit{MODE} $\Leftarrow$ `ready to pull'
  \ELSIF{($V > \bar{V} + \epsilon_{\mathrm{th2}}$) \AND \textit{configuration} $==$ `front'}
  \STATE \textit{MODE} $\Leftarrow$ `repositioning'
  \ELSIF {($V > \bar{V} + \epsilon_{\mathrm{th1}}$) \AND \textit{configuration} $==$ `rear'\,}
  \STATE \textit{MODE} $\Leftarrow$ `pulling'
  \ELSIF{($V < \bar{V} - \epsilon_{\mathrm{th1}}$) \AND \textit{configuration} $==$ `rear'\,}
  \STATE \textit{MODE} $\Leftarrow$ `ready to pull'
  \ELSIF{($V < \bar{V} - \epsilon_{\mathrm{th2}}$) \AND \textit{configuration} $==$ `rear'\,}
  \STATE \textit{MODE} $\Leftarrow$ `repositioning'
  \ENDIF
 \RETURN \textit{MODE}
 \end{algorithmic}
 \end{algorithm}
\end{figure}
%
\subsection{Controller Design} 
The control system of the tethered UAV$-$buoy system consists of an outer-loop and an inner-loop controller in a cascaded structure.
The outer-loop controller has two functions: 1) it controls the UAV's relative position when the system mode is `free', `ready to pull', or `repositioning' by controlling $r$ and $\alpha$, with setpoint $(\bar{z}_{\mathrm{u}0},\bar{r}_0)$; and 2) it controls the buoy's velocity when the system mode is `pulling', by regulating the elevation angle, $\alpha$, and the cable tension, $T$, which are two flat outputs of the coupled system \cite{Tognon2017}, with setpoint $(\bar{z}_{\mathrm{u}0},\bar{V}_0)$.
On the other hand, the inner-loop controller controls and stabilizes the UAV's pitch angle, $\theta_{\mathrm{u}}$. 
The proposed controller (SVCS) incorporates the state machine in Section~\ref{sub_sec_state_machine}, and it is designed based on polar coordinates. The SVCS architecture is presented in Fig.~\ref{fig_Controller_Diagram}, which can be summarized as follows: 
    \begin{itemize}
        \item A setpoint is defined and the state machine returns the system mode. 
        \item A preprocessing unit generates 1) an elevation angle $\bar{\alpha}$ that accounts for the actual buoy elevation variation, 2) a reference radial distance $\bar{r}$, 3) a smoothed reference velocity $\bar{V}$, and 4) an estimate for the required cable tension, $\hat{T}_{\mathrm{c}}$, to compensate for water drag, if applicable.
        \item The outer-loop controller generates radial and tangential components of the desired force that is needed for cable tension control, $u_r^{\mathrm{v}}$, in case of velocity control or simply radial force, $u_r^{\mathrm{p}}$, in case of position control (radial), and the elevation angle control, $u_{\alpha}$ (tangential).
        Note that the switching between $u_r^{\mathrm{p}}$ and $u_r^{\mathrm{v}}$ is governed by the operational mode of the system such that:
        \begin{equation} \label{eq_u_switching}
            u_r = \big(1 - \mathrm{f_{pl}} \mathrm{H}(s)\big) u_r^{\mathrm{p}} + \mathrm{f_{pl}} u_r^{\mathrm{v}},
        \end{equation}
        \noindent where $\mathrm{H}(s)$ is the transfer function of a low-pass filter, and $\mathrm{f_{pl}} \in \{0,1\}$ is the `pulling' mode flag.
        \item The outer-loop controller outputs are decoupled into a command total thrust, $u_1$, and a desired pitch angle, $\theta_{\mathrm{u,c}}$. 
        \item Finally, the inner-loop attitude controller stabilizes the pitch angle of the UAV and produces the moment command input, $u_2$. 
        \end{itemize}
\par
%
\begin{figure}
\centerline{\includegraphics[width=3.45in]{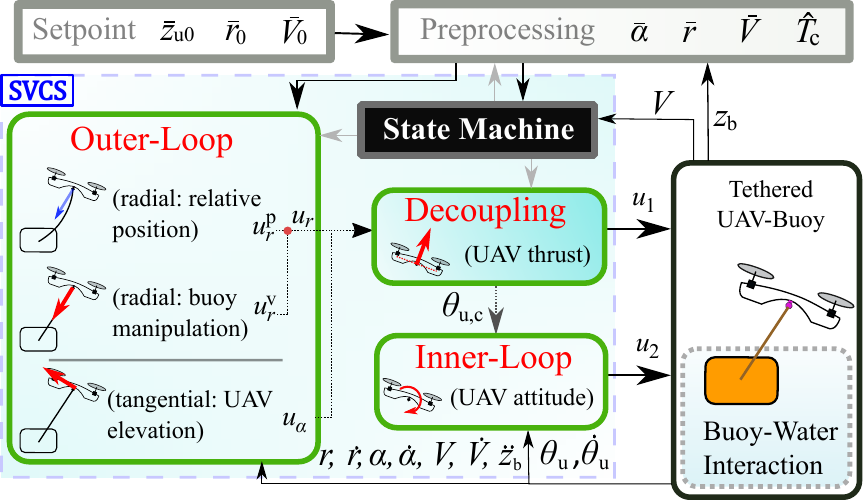}}
\caption{Architecture of the Surge Velocity Control System (SVCS) for the tethered UAV$-$buoy system.}
\label{fig_Controller_Diagram}
\end{figure}
%
\par
\subsubsection{Reference Signals and Velocity Setpoint}
It is desired for the UAV to maintain the same altitude during operation in order to respect aviation safety margins and save energy by reducing unnecessary vertical motion. The cable length and nominal elevation angle are chosen accordingly. 
However, due to the vertical oscillatory motion of the buoy in accordance with the encountered waves, we must actively provide the controller with suitable elevation angle, $\bar{\alpha}$, to hold the desired UAV elevation, which is computed as:
\begin{equation} \label{eq_alpha_desired}
    \bar{\alpha}=\text{asin}\big((\bar{z}_{\mathrm{u}}-z_{\mathrm{b}})/\bar{r} \big).
\end{equation}
\noindent Note that the preprocessing unit outputs the supplementary angle of $\bar{\alpha}$, i.e. $\bar{\alpha} \Leftarrow \pi - \bar{\alpha}$, if the system configuration is `\textit{rear}'.
Furthermore, the velocity setpoint, $\bar{V}_0$, and the radial position, $\bar{r}_0$, are smoothed by second-order and fourth-order low-pass filters, respectively, in order to respect the system dynamics in terms of buoy$-$water friction and the UAV's maximum thrust, thus preventing excessive coupling and decoupling of the system \cite{Tognon2017}.
Finally, in order to improve the performance of the state machine, $\bar{V}$ is sent to the controller only when the UAV is ready to enter the `pulling' mode.
\par
\subsubsection{UAV-Buoy Relative Position Control Law} \label{sec_rel_pos_control_law}
Consider the UAV$-$buoy's relative position dynamics in (\ref{eq_UAV_polar_frame}) for the generic case, i.e. nonzero tension, and the UAV's attitude dynamics in (\ref{eq_UAV_inertial_Newton}), with states vector $\bm{X}_1=[r,\, \alpha,\, \theta_\mathrm{u}]^{\intercal}$ and $\bm{X}_2=[\dot{r},\, \dot{\alpha},\, \dot{\theta}_\mathrm{u}]^{\intercal}$, and control input vector $\bm{U}=[u_r^{\mathrm{p}},\, u_{\alpha},\, u_2]^{\intercal}$, such that:
\begin{equation} \label{eq_control_inputs_transform}
        u_r = u_1 s_{\alpha + \theta_{\mathrm{u}}}, \qquad
        u_{\alpha} = u_1 c_{\alpha + \theta_{\mathrm{u}}},
\end{equation}
\noindent and subject to unknown external disturbances like wind gusts, gravity waves, and water currents. Note that the relation between $u_{r}^{\mathrm{p}}$ and $u_r$ was given in (\ref{eq_u_switching}). When represented as a nonlinear second-order time-varying system, the state space form is described as:
\begin{equation} \label{eq_ss_rat_model}
\begin{split}
    \dot{\bm{X}}_1 & = \bm{X}_2, \\
    \dot{\bm{X}}_2 & = \bm{H} + \bm{\Phi} \bm{\Theta} + \bm{b}\bm{U} + \bm{\delta},
\end{split}
\end{equation}
\noindent where $\bm{b} = diag(m_{\mathrm{u}},\,m_{\mathrm{u}} r,\,J_{\mathrm{u}})^{-1}$ is the input-multiplied vector, $\bm{\Phi}=[1/{(m_{\mathrm{u}}  c_{\alpha})},\,0,\,0]^{\intercal}$ is the regressor vector, $\bm{\Theta}=\hat{T}$ is the parameters vector; $\bm{\delta}=[\delta_r, \,\delta_{\alpha}, \,\delta_{\theta}]^{\intercal}$ is the vector of lumped system disturbances and modeling errors across each channel, where $\bm{\hat{\delta}}=[\hat{\delta}_r,\,\hat{\delta}_{\alpha},\,\hat{\delta}_\theta]^{\intercal}$ is its estimate; and $\bm{H} \in \mathbb{R}^3$ denotes the nonlinear and gravitational terms vector defined as:
\begin{align*}
    \bm{H} & =  
    \begin{bmatrix}
        r \dot{\alpha}^2 - \ddot{x}_{\mathrm{b}} c_{\alpha} - \ddot{z}_{\mathrm{b}} s_{\alpha} -\textsl{g}s_{\alpha}\\
        (-2\dot{r}\dot{\alpha} + \dot{V} s_{\alpha} - \ddot{z}_{\mathrm{b}} c_{\alpha} -\textsl{g}c_{\alpha}) / r\\
        0\\
    \end{bmatrix}.
\end{align*}
\begin{assumption}\label{assump_bounded_dist}
  The modeling errors and external disturbances and their derivatives are bounded. 
\end{assumption}
\begin{assumption} \label{assump_slow_dist}
  The lumped error vector $\bm{\delta}$ is constant or slowly varying during a finite time interval, that is: $\lim_{t_1<t<t_2} \dot{\delta}_{\alpha}, \dot{\delta}_{r},\dot{\delta}_{\theta} \approx 0$. 
\end{assumption}
\par
Let $\theta'_{\mathrm{u,c}}$ be the desired UAV pitch angle to be generated by the outer-loop controller along with the total thrust command, $u_1$, which are calculated as:
\begin{equation} \label{eq_outer_loop_u1_thetac}
       u_{1} = \sqrt{u_{\alpha}^2+u_r^2},\qquad
        \theta'_{\mathrm{u,c}} = \frac{\pi}{2}-\alpha - \text{arctan}(u_{\alpha},u_r).
\end{equation}
Let $\theta_{\mathrm{u,c}}=\theta_{\mathrm{u,m}} \, \text{tanh}\big(\theta'_{\mathrm{u,c}}/\theta_{\mathrm{u,m}}\big)$
be a smooth and bounded version of $\theta'_{\mathrm{u,c}}$, with $ \theta_{\mathrm{u,m}} \in  (0,\frac{\pi}{2})$ being the absolute upper limit of the UAV's attitude angle.
The reference state vector to be followed is defined as $\bar{\bm{X}}_1=[\bar{r},\,\bar{\alpha},\,\theta_{\mathrm{u,c}}]^{\intercal}$.
Let the state error vector be defined as:
\begin{equation} \label{eq_e1}
            \bm{e}_1 = \bm{X}_1 - \bar{\bm{X}}_1.
\end{equation}
\par
The proposed control law including the radial and tangential thrust components for the outer-loop UAV's relative position controller, and the UAV's pitching torque, is defined as \cite{Kourani2021_PID_Like_Backstepping}: 
\begin{equation} \label{eq_U_p_PID}
    \begin{split}
        \bm{U} & = \bm{b}^{-1}\big[  -\bm{k}_P\bm{e}_1 -\bm{k}_D \dot{\bm{e}}_1-\bm{k}_I\bm{e}_1^{I} + \ddot{\bar{\bm{X}}}_1 -\bm{H} -\bm{\Phi}  \bm{\Theta} \big],\\
        \dot{\bm{e}}_1^I & = \bm{e}_1 + \bm{k}_1^{-1} \dot{\bm{e}}_1,
        \end{split}
\end{equation}
\noindent where $\bm{k}_{P}$, $\bm{k}_{D}$, $\bm{k}_{I}$, and $\bm{k}_{1} \in \mathbb{R}_{>0}^{3\times3}$ are controller gains that are defined next.
\begin{theorem} \label{theorem_ss_controller}
    Consider the UAV$-$buoy's relative position dynamics in (\ref{eq_UAV_polar_frame}), and the state space representation of the system in (\ref{eq_ss_rat_model}). Suppose that Assumptions \ref{assump_bounded_dist} and \ref{assump_slow_dist} hold true; the control law in (\ref{eq_outer_loop_u1_thetac}) and (\ref{eq_U_p_PID}) generates the total thrust, $u_1$, and the UAV's desired pitch angle, $\theta_{\mathrm{u,c}}$, that can stabilize the system, and reduce the tracking error to zero in finite time for a set of gains $\bm{k}_1$, $\bm{k}_{2}$, and $\bm{\gamma} \in \mathbb{R}_{>0}^{3\times3}$,
    such that $\bm{k}_{P} = \bm{I}_3+\bm{k}_1 \bm{k}_2$, $\bm{k}_{D} = \bm{k}_1 + \bm{k}_2$, $\bm{k}_{I} = \bm{\gamma} \bm{k}_1$, with $\bm{I}_3$ being the identity matrix. If Assumption~\ref{assump_slow_dist} does not hold, the tracking error reduces to a small region neighboring the origin in finite time.
\end{theorem}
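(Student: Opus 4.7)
The plan is to substitute the control law into the closed-loop error dynamics and establish convergence via a backstepping Lyapunov analysis. First I would form the error $\bm{e}_1 = \bm{X}_1 - \bar{\bm{X}}_1$ and differentiate twice using (\ref{eq_ss_rat_model}); substituting (\ref{eq_U_p_PID}) and exploiting $\bm{b}\bm{b}^{-1} = \bm{I}_3$ cancels the feedforward terms $\bm{H}$, $\bm{\Phi}\bm{\Theta}$, and $\ddot{\bar{\bm{X}}}_1$, leaving the compact error dynamics
\begin{equation*}
    \ddot{\bm{e}}_1 = -\bm{k}_P \bm{e}_1 - \bm{k}_D \dot{\bm{e}}_1 - \bm{k}_I \bm{e}_1^I + \bm{\delta}, \qquad \dot{\bm{e}}_1^I = \bm{e}_1 + \bm{k}_1^{-1}\dot{\bm{e}}_1.
\end{equation*}

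Second, I would introduce the backstepping coordinates $\bm{z}_1 = \bm{e}_1$ and $\bm{z}_2 = \dot{\bm{e}}_1 + \bm{k}_1 \bm{e}_1$, so that $\dot{\bm{z}}_1 = -\bm{k}_1 \bm{z}_1 + \bm{z}_2$ immediately. Inserting the structured gain choice $\bm{k}_P = \bm{I}_3 + \bm{k}_1 \bm{k}_2$, $\bm{k}_D = \bm{k}_1 + \bm{k}_2$, $\bm{k}_I = \bm{\gamma}\bm{k}_1$ into $\dot{\bm{z}}_2 = \ddot{\bm{e}}_1 + \bm{k}_1 \dot{\bm{e}}_1$ produces a telescoping cancellation
\begin{equation*}
    \dot{\bm{z}}_2 = -\bm{z}_1 - \bm{k}_2 \bm{z}_2 - \bm{\gamma}\bm{k}_1 \bm{e}_1^I + \bm{\delta}, \qquad \dot{\bm{e}}_1^I = \bm{k}_1^{-1}\bm{z}_2,
\end{equation*}
which is the structural payoff of the PID-like backstepping design; verifying these cancellations is the main algebraic step. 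Under Assumption~\ref{assump_slow_dist}, introducing the shifted integrator $\tilde{\bm{\xi}} = \bm{e}_1^I - (\bm{\gamma}\bm{k}_1)^{-1}\bm{\delta}$ absorbs the bias and makes $(\bm{z}_1, \bm{z}_2, \tilde{\bm{\xi}})$ autonomous and linear.

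Third, I would analyze the composite Lyapunov candidate
\begin{equation*}
    V = \tfrac{1}{2}\bm{z}_1^{\intercal}\bm{z}_1 + \tfrac{1}{2}\bm{z}_2^{\intercal}\bm{z}_2 + \tfrac{1}{2}\tilde{\bm{\xi}}^{\intercal}\bm{\gamma}\bm{k}_1^{2}\tilde{\bm{\xi}}.
\end{equation*}
With the gain matrices taken diagonal (as is standard for such designs), the skew-cross terms $\bm{z}_1^{\intercal}\bm{z}_2 - \bm{z}_2^{\intercal}\bm{z}_1$ and $-\bm{z}_2^{\intercal}\bm{\gamma}\bm{k}_1\tilde{\bm{\xi}} + \tilde{\bm{\xi}}^{\intercal}\bm{\gamma}\bm{k}_1 \bm{z}_2$ vanish identically, yielding $\dot{V} = -\bm{z}_1^{\intercal}\bm{k}_1 \bm{z}_1 - \bm{z}_2^{\intercal}\bm{k}_2 \bm{z}_2 \le 0$. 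LaSalle's invariance principle then confines trajectories to $\{\bm{z}_1 = \bm{z}_2 = \bm{0}\}$; substituting back into $\dot{\bm{z}}_2 = \bm{0}$ forces $\tilde{\bm{\xi}} \to \bm{0}$, so the integrator converges to the exact DC cancellation of $\bm{\delta}$ and $\bm{e}_1 \to \bm{0}$.

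Finally, if Assumption~\ref{assump_slow_dist} is relaxed, $\tilde{\bm{\xi}}$ inherits the nonvanishing forcing $-(\bm{\gamma}\bm{k}_1)^{-1}\dot{\bm{\delta}}$; combined with Assumption~\ref{assump_bounded_dist} and a Young-inequality estimate, the same $V$ yields $\dot{V} \le -\lambda V + c\,\|\dot{\bm{\delta}}\|^2$, giving uniform ultimate boundedness with radius tunable by $\bm{k}_1, \bm{k}_2, \bm{\gamma}$. The two obstacles I anticipate are: (i) the cross-term cancellation relies on the diagonal assumption, and for nondiagonal gains one would need to enlarge $V$ by a small mixed term $\varepsilon\,\bm{z}_1^{\intercal}\bm{z}_2$ and tune $\varepsilon$ to preserve definiteness; and (ii) the literal "finite time" rate, since a linear closed loop delivers only exponential convergence via $\|\bm{e}_1(t)\| \le c\,e^{-\lambda t}\|\bm{e}_1(0)\|$ obtained by sandwiching $V$ between quadratic forms, so a strict finite-time conclusion would require reading the statement in the exponential sense or invoking additional fractional-power feedback in the spirit of the cited terminal-style design.
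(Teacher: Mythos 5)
Your proposal is correct and follows essentially the same route as the paper: your coordinates $\bm{z}_2=\dot{\bm{e}}_1+\bm{k}_1\bm{e}_1$ and shifted integrator $\tilde{\bm{\xi}}$ are exactly the paper's backstepping error $\bm{e}_2$ and scaled disturbance-estimation error $\tilde{\bm{\delta}}(\bm{\gamma}\bm{k}_1)^{-1}$, your Lyapunov function coincides with the paper's $\mathcal{V}_2$ after that change of variables, and you arrive at the same $\dot{V}=-\bm{z}_1^{\intercal}\bm{k}_1\bm{z}_1-\bm{z}_2^{\intercal}\bm{k}_2\bm{z}_2$, the only difference being that you verify the given PID-like law directly (via LaSalle) whereas the paper synthesizes the law from the Lyapunov analysis (via Barbalat) and then identifies $\bm{e}_1^{I}=\hat{\bm{\delta}}(\bm{\gamma}\bm{k}_1)^{-1}$ to recover the PID-like form. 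Your caveats about requiring diagonal (commuting) gains and about ``finite time'' really amounting to asymptotic/exponential convergence are fair, and they apply equally to the paper's own argument.
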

\begin{proof}
    The backstepping control design, involving two steps, is employed, and the Lyapunov function $\mathcal{V}_1=\frac{1}{2} \bm{e}_1^{\intercal}\bm{e}_1$ is proposed. Its derivative is expressed as: 
    $\dot{\mathcal{V}}_1 = \bm{e}_1^{\intercal}\dot{\bm{e}}_1$.
    Since $\dot{\bm{e}}_1$ does not explicitly include a control input, we continue the control design process for a second step. The virtual control input to stabilize $\bm{e}_1$ is defined as:
    $\bm{\Upsilon} = \dot{\bar{\bm{X}}} - \bm{k}_1 \bm{e}_1$.
    Next, we define the virtual rates error as:  $\bm{e}_2 = \dot{\bm{X}}_1 - \bm{\Upsilon}$.
    \par
    By defining a second Lyapunov function:
    \begin{align*}
        \mathcal{V}_2 = \frac{1}{2} \bm{e}_1^{\intercal}\bm{e}_1 + \frac{1}{2}\bm{e}_2^{\intercal}\bm{e}_2 + \frac{1}{2}\tilde{\bm{\delta}}^{\intercal}\bm{\gamma}^{-1} \tilde{\bm{\delta}},
    \end{align*}
    \noindent where $\tilde{\bm{\delta}} = \hat{\bm{\delta}} - \bm{\delta}$,
    then by differentiating and combining it with $\mathcal{V}_1$, we get:
    \begin{align*}
        \dot{\mathcal{V}}_2 & = \bm{e}_1^{\intercal} \dot{\bm{e}}_1 + \bm{e}_2^{\intercal} \dot{\bm{e}}_2 + \tilde{\bm{\delta}}^{\intercal} \bm{\gamma}^{-1} \dot{\hat{\bm{\delta}}} \\ 
        & = \bm{e}_1^{\intercal}(\bm{e}_2 - \bm{k}_1 \bm{e}_1) + \bm{e}_2^{\intercal} (\bm{H}  + \bm{b}\bm{U} +\bm{\Phi}\bm{\Theta} + \bm{\delta} - \dot{\bm{\Upsilon}} ) \\
        & + \tilde{\bm{\delta}}^{\intercal} \bm{\gamma}^{-1} \dot{\hat{\bm{\delta}}}.
    \end{align*}    
    Next, we choose the control inputs and the lumped modeling and disturbances errors' update rates such that $\dot{\mathcal{V}}_2$ becomes negative semi-definite:
    \begin{equation} \label{eq_U_rat_lyapunov}
        \begin{split}
            \bm{U}  & = \bm{b}^{-1} \big( -\bm{H} -\bm{\Phi}\bm{\Theta} -\hat{\bm{\delta}} + \dot{\bm{\Upsilon}} - \bm{e}_1 - \bm{k}_2 \bm{e}_2 \big),  \quad  
            \dot{\hat{\bm{\delta}}} = \bm{\gamma} \bm{e}_2,
        \end{split}
    \end{equation}
    and we get $\dot{\mathcal{V}}_2 = -\bm{e}_1^{\intercal} \bm{k}_1 \bm{e}_1 - \bm{e}_2^{\intercal} \bm{k}_2 \bm{e}_2$. 
    Thus, the asymptotic convergence of $\mathcal{V}_2$ to zero can be obtained via Barbalat's lemma under Assumption~\ref{assump_slow_dist}. If strong wind and wave disturbances exist, meaning the violation of Assumption~\ref{assump_slow_dist}, the control law will still achieve stability and finite tracking error, which can be reduced by increasing the controller gains up to a level that overcomes the disturbances mismatch effect on $\dot{\mathcal{V}}_2$. Finally, by substituting $\dot{\bm{\Upsilon}}$ and $\bm{e}_2$ in (\ref{eq_U_rat_lyapunov}), and setting $\bm{e}_1^{I} := \hat{\bm{\delta}}  (\bm{\gamma} \bm{k}_1)^{-1}$, the PID-like control law in (\ref{eq_U_p_PID}) is obtained.
\end{proof}
\par
\subsubsection{Buoy Surge Velocity Control Law} \label{sec_surge_vel_control_law}
Consider the UAV dynamics in $\mathcal{W}'$, while following the polar coordinates notation as presented in (\ref{eq_UAV_polar_frame}), and let the buoy's velocity error be defined as $e_V = V-\bar{V}$.
The buoy velocity model can be expressed in the generic case, i.e. variable radial position, as:
\begin{equation} \label{eq_velocity_elevation_model}
    \dot{V}  =   H_V  - T/ (m_{\mathrm{u}}  c_{\alpha}) + u_r^{\mathrm{v}} / (m_{\mathrm{u}}  c_{\alpha}), \\
\end{equation}
\noindent where
    $H_V = ( r \dot{\alpha}^2 - \ddot{r} - \ddot{z}_{\mathrm{b}} s_{\alpha} -\textsl{g}\,s_{\alpha}) / c_{\alpha}$. 
A control law can be designed for the surge velocity in a similar fashion as described in Section~\ref{sec_rel_pos_control_law}, with a difference that only one step is required in the backstepping process. The resulting control law is given by:
\begin{equation} \label{eq_u_r_V}
    u_r^{\mathrm{v}} = \hat{T}_\mathrm{c} + m_{\mathrm{u}} c_{\alpha} \big( -H_V  + \dot{\bar{V}} - k_{PV} e_{V} - k_{IV} e_{V}^I \big),\;\;\; \dot{e}_{V}^I = e_V,
\end{equation}
\noindent where $k_{PV}$ and $k_{IV} \in \mathbb{R}_{>0}$ are controller gains. The results of Theorem~\ref{theorem_ss_controller} relative to stability and tracking apply.
\begin{remark} \label{rem_tension_estimation}
    Cable tension can be either directly measured (e.g. load cell) to improve the tracking performance and the system's overall safety, or it can be estimated via an observer design based on cable  disturbance  estimation  methods. 
    However, this internal force, $T$, and its estimate, $\hat{T}$,  should not be confused with the term $\hat{T}_{\mathrm{c}}$ used in the control law (\ref{eq_u_r_V}), and representing the required tensile force to manipulate the buoy.
    One simple realization is obtained based on (\ref{eq_cable_tension_equilibrium}), such that: 
    \begin{equation} \label{eq_tension_feedforward}  
        \hat{T}_{\mathrm{c}} = D_{\mathrm{b},11,0} \bar{V} / c_{\alpha}, 
    \end{equation}
    \noindent where $D_{\mathrm{b},11,0} = \bar{C}_{\mathrm{S},1} \mathrm{A}_{\mathrm{wt},0} \frac{1}{2} \rho_{\mathrm{w}} |\bar{V}|$, with $\mathrm{A}_{\mathrm{wt},0}$ being the zero-tension whetted area, and $\bar{C}_{\mathrm{S},1}$ being the surge skin friction coefficient at $\bar{V}$,
    which yields a fair, yet not very accurate, estimate. However, the proposed controller can compensate for the estimation error as will be proven next.
    For a sample cable tension estimation based on disturbance observation, readers are referred to \cite{Tal2018AccurateTrackingTR}.
\end{remark}
\begin{remark} \label{rem_robust_performance}
    Practically, robust performance of the proposed control laws is guaranteed by choosing large-enough $\bm{k}_2$ gains for a wide operating range, even if Assumption~\ref{assump_slow_dist} is violated \cite{Hu2010DIARC_Gantry}. 
\end{remark}
%
\section{Simulations} \label{sec_simulation}
%
In this section, we provide simulation results that demonstrate the fidelity of the tethered UAV$-$buoy system model and the performance of the designed controller. We first define the settings and parameters used for the devised simulation scenarios, which include various operating conditions to validate the proposed system.
To challenge the control law's performance towards real-life implementation, the tethered UAV$-$buoy system model is incorporated in the simulator developed in this work, while including deviation from the described model used by the control law, including the UAV's propellers motor dynamics, wind gusts, and non-exact state feedback.
\subsection{Simulation Settings}
To validate the proposed UAV$-$buoy system with the designed SVCS, a series of simulations is performed in the MATLAB Simulink\,\textsuperscript{\tiny\textregistered} environment.
We consider a quadrotor UAV and a simplified homogeneous cuboid buoy with the dimensions and parameters listed in Table~\ref{tab_system_param}. The quadrotor UAV motor dynamics are modeled as a first-order low-pass filter with a time constant $\tau_m=\SI[unitsep=medium]{0.05}{\second}$, and its total thrust and pitch torque are bounded such that $\|u_1\| \leq \SI[unitsep=medium]{160}{\newton}$, and $\|u_2\| \leq \SI[unitsep=medium]{11.2}{\newton \meter}$. The mass of the buoy is chosen such that the buoy is one quarter immersed under no external loads based on the balance between the gravitational and buoyancy forces, that is $m_{\mathrm{b}} := \rho_{\mathrm{w}} \curlyvee_{\mathrm{b}}/4$.
%
\begin{table}
    \caption{Tethered UAV$-$buoy model parameters}
    \begin{center}
    \begin{tabular}{ p{1.0cm} p{1.2cm} p{1.0cm} |p{1.0cm} p{1.1cm} p{1.0cm} } 
        \hline 
        Par. & Value & Unit & Par. & Value & Unit \\
        \Xhline{2\arrayrulewidth}
        \hline
        $l_{\mathrm{b}}$  &  0.8  &  $\SI[unitsep=medium]{}{\meter}$   & $m_{\mathrm{u}}$  &  1.8  &   $\SI[unitsep=medium]{}{\kilogram}$  \\
        $h_{\mathrm{b}}$  &  0.25 &  $\SI[unitsep=medium]{}{\meter}$    & $J_{\mathrm{u}}$  &  0.03  &  $\SI[unitsep=medium]{}{\kilogram \square\meter}$ \\
        $m_{\mathrm{b}}$  & 12.5   & $\SI[unitsep=medium]{}{\kilogram}$    & $\theta_{\mathrm{u,m}}$  &  $\pi/4$  &  $\SI[unitsep=medium]{}{\radian}$ \\
        $a_{11}$  &  $0.625$  & $\SI[unitsep=medium]{}{\kilogram}$    & $l$  &  7 & $\SI[unitsep=medium]{}{\meter}$\\
        $a_{33}$  &  $12.5$  & $\SI[unitsep=medium]{}{\kilogram}$ &  $\epsilon_{T}$  &  5  & $\SI[unitsep=medium]{}{\newton}$\\
        $b_{11}$  &  0  & $\SI[unitsep=medium]{}{\newton \second /\meter}$ & $\epsilon_{\curlyvee}$  & 0.05  & - \\      
        $b_{33}$  & 27.5  & $\SI[unitsep=medium]{}{\newton \second / \meter}$ & $\epsilon_{m}$  &  0.1  & - \\
        $\nu_{\mathrm{w}}$ & 1.78e-6 & $\SI[unitsep=medium]{}{\square\meter/\second}$ & $\rho_{\mathrm{w}}$  &  1000  & $\SI[unitsep=medium]{}{\kilogram /\meter\cubed}$\\ 
        $\textsl{g}$  & 9.81 & $\SI[unitsep=medium]{}{\meter/\square\second}$ & $\rho_{\mathrm{a}}$  &  1.22  & $\SI[unitsep=medium]{}{\kilogram /\meter\cubed}$\\ 
        \hline
            \end{tabular}
    \label{tab_system_param}
    \end{center}
\end{table}
%
The buoy's immersed volume is then defined as:
\begin{equation} \label{eq_buoy_wetted_area} 
   \curlyvee_{\mathrm{im}}(\Delta h) = \begin{cases}
    \curlyvee_\mathrm{b} & \text{if $\Delta h > \frac{h_{\mathrm{b}}}{2}$},\\
    0 & \text{if $\Delta h < -\frac{h_{\mathrm{b}}}{2}$},\\
    \curlyvee_\mathrm{b}/2 + l_{\mathrm{b}} h_{\mathrm{b}} \Delta h & \text{otherwise},\\
  \end{cases}
\end{equation}
\noindent where $\Delta h = \zeta(x_{\mathrm{b}},t) - z_{\mathrm{b}}(t)$. 
The wetted area is calculated as:
\begin{equation} \label{eq_buoy_wetted_area_Aw} 
   \mathrm{A}_{\mathrm{wt}}(\Delta h) = \begin{cases}
    4 l_{\mathrm{b}} h_{\mathrm{b}} & \text{if $\Delta h > \frac{h_{\mathrm{b}}}{2}$},\\
    0 & \text{if $\Delta h < -\frac{h_{\mathrm{b}}}{2}$},\\
    l_{\mathrm{b}} h_{\mathrm{b}} + 2 l_{\mathrm{b}} (\frac{h_{\mathrm{b}}}{2} + \Delta h) & \text{otherwise}.\\
  \end{cases}
\end{equation}
\noindent The resulting added mass and damping are calculated as described in Section \ref{subsec_buoy_model}, and their values are presented in Table~\ref{tab_system_param}.
The buoy's skin friction coefficients in its body $x$- and $z$-directions can be estimated as $C_{S,i} = 0.075/(\log_{10}\text{Re}-2)^2$,
where $\text{Re}= \frac{|V_{\mathrm{r}}| l_{\mathrm{b}}}{\nu_{\mathrm{w}}} \in \mathbb{R}_{\geq 0}$ is the Reynolds number, limited to turbulent flows ($\text{Re} > 10^5$), with $\nu_{\mathrm{w}}$ being the water's  kinematic viscosity \cite{Lewis1988}.
To detect the coupling state of the system (coupled / decoupled), we rely on the tension estimation in the second case of (\ref{eq_cable_tension_estimation}).
\par
\sloppy To properly evaluate the performance of the proposed SVCS design, a Cartesian-based nominal controller (CBNC) that uses a PID control law in its outer-loop, and without supervision of a state-machine, is implemented for benchmarking purposes. It consists of a velocity ($\dot{x}$) controller and an elevation ($z$) controller, with gains $\bm{k}_{P,CBNC}=diag(7,\,3)$,  $\bm{k}_{I,CBNC}=diag(1.2,\,1)$, and  $\bm{k}_{D,CBNC}=diag(5,\,2)$, respectively. 
The SVCS gains are selected as $\bm{k}_1=diag(16.9,\,4.6,\,7.5)$,  $\bm{k}_2=diag(2.6,\,2.4,\,2.5)$,  $\bm{\gamma}=diag(0.5,\,0.3,\,0.3)$, 
$k_{PV}=25$, and $k_{IV}=12$.
\par
The feedback signals are assumed to be available from sensor measurements and estimations, and are modeled as follows: the UAV's pose is virtually obtained from an on-board Global Positioning System / Inertial Navigation System (GPS/INS) module, and the elevation angle and the radial distance are virtually obtained from a stereo camera system.
In simulation, this is mimicked by augmenting the feedback states by a filtered Gaussian noise with the corresponding state-of-the-art accuracy of each sensor before being used by the controller.
With $\mathrm{mav}(\tilde{\bigcdot})$ denoting the mean absolute value of the estimation error of entity $(\bigcdot)$, we set $\mathrm{mav}(\tilde{x}_{\mathrm{u}}) = \SI[unitsep=medium]{0.02}{\meter} = \mathrm{mav}(\tilde{z}_{\mathrm{u}}) = \SI[unitsep=medium]{0.02}{\meter} $, $\mathrm{mav}(\tilde{\theta}_{\mathrm{u}}) = \SI[unitsep=medium]{0.5}{\degree}$, $\mathrm{mav}(\tilde{\alpha}) = \SI[unitsep=medium]{0.16}{\degree}$, and $\mathrm{mav}(\tilde{r}) = \SI[unitsep=medium]{0.02}{\meter}$.
Subsequently, the buoy's states are determined from (\ref{eq_UAV_polar_coordinates}), (\ref{eq_dot_r_in_W_prime}), and (\ref{eq_ddot_r_in_W_prime}). More details on state estimation is given in Section~\ref{sec_states_estimation}.
\par
\subsection{Velocity Bounds} \label{subsec_velocity_bounds}
The constraints' bounds ($\epsilon_{T}$, $\epsilon_{\curlyvee}$, and $\epsilon_{m}$) presented in Table~\ref{tab_system_param} are mainly based on the expected buoy$-$water relative velocity, in addition to the buoy's shape, weight, and skin friction.
A possible command velocity range of $S_{\bar{V}}=(-19.0,-3.1) \cup (2.1,18.0) \SI[unitsep=medium]{}{\meter \per \second}$ is calculated from (\ref{eq_valid_vel_set}) under no-wave condition (Assumption~\ref{assump_equilibrium_NO_wave}).
\par
In the presence of waves, the feasible working velocity with no violation of constraint (\ref{eq_water_contact_condition}) reduces from above, and can be quantified by referring to (\ref{eq_fly_over_constraint}) and (\ref{eq_u1_bar_Vol_bar_b}) as follows. We solve for $\bar{\curlyvee}_{\mathrm{im}}$ to get $\bar{h}_{\mathrm{im}}$, then find $\Delta h_{\mathrm{amp}}$ for some $(\bar{V},\bar{\alpha})$, under different wave conditions.
Fig.~\ref{fig_dynamic_amplification} provides the buoy's heave dynamic amplification results under excitation of a single fully-developed wave component \cite{Fossen2011}, with $\bar{\alpha}=\SI[unitsep=medium]{45}{\degree}$ and $U_{\mathrm{c}}=0$. To have a unified representation of $\bar{h}_{\mathrm{im}}$, the Stokes drift effect is neglected in calculating $\bar{V}_r$.  
The natural frequency of the buoy, calculated from (\ref{eq_buoy_nat_freq_and_damp}), is $\omega_\mathrm{b}=\SI[unitsep=medium]{8.9}{\radian\per\second}$.
\par
Fig.~\ref{fig_dynamic_amplification} can be interpreted as follows:
for a given sea condition with wave amplitude and period $\{A_n,\mathrm{T}_n\}$, the buoy hops over the waves (`fly-over' condition) when its horizontal (forward or backward) velocity, $V$, falls outside the shaded area (dome) formed by the $\bar{h}_\mathrm{im}$ curve, for a given $\Delta h_{\mathrm{amp},n}$ (colored plots corresponding to various wave amplitudes and periods).
Sample zones, where the buoy `fly-over' condition does not occur, are marked on top of the figure as $S_{\bar{V},n}^{\mathrm{fo}}$.
%
%
The comprehensive results captured by  Fig.~\ref{fig_dynamic_amplification} show that the system operation is direction-dependent, and they also serve as a reference for predicting the performance of the buoy in terms of heave oscillation and `fly-over' phenomenon under different wave conditions, ranging from high-frequency low-amplitude waves to low-frequency high-amplitude ones, and even for superposition of various waves, as will be demonstrated in the subsequent sections.
We note that the above analysis is provided for a buoy of known characteristics (Table~\ref{tab_system_param}), and serves as a guideline for the system performance.
\par
%
\begin{figure}
\centerline{\includegraphics[width=3.45in]{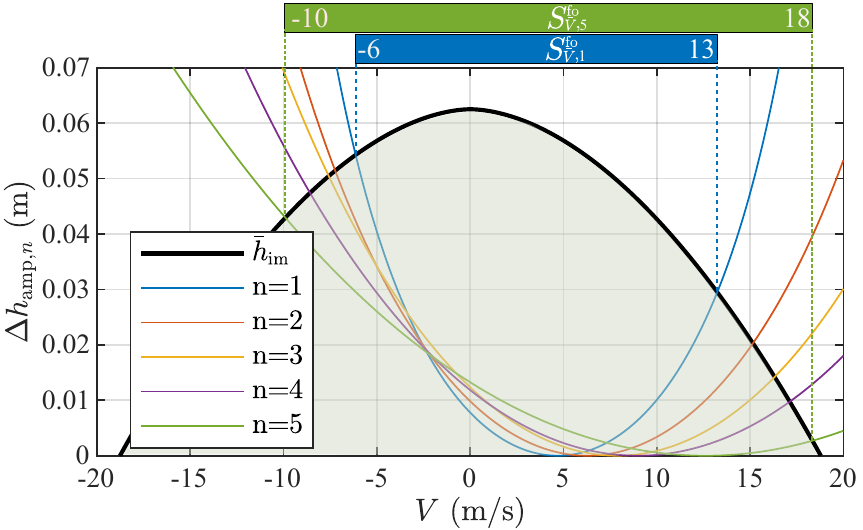}}
\caption{Buoy's heave dynamic amplification, $\Delta h_{\mathrm{amp},n}$, under excitation of different fully-developed single wave components $\{A_n,\mathrm{T}_n\}$ of the sets $A = \{0.27,0.61,,1.2,1.5,3.3\}/2 \, \SI[unitsep=medium]{}{\meter}$, and $\mathrm{T}=\{3,4,5,5.7,8\} \, \SI[unitsep=medium]{}{\second}$. The mean buoy's immersed height, $\bar{h}_{\mathrm{im}}$, draws the boundary dome for the `fly-over'-free region.}
\label{fig_dynamic_amplification}
\end{figure}
%
%
\subsection{Simulation Scenarios}
We validate the fidelity of the derived system model and evaluate the performance of the designed controller in four cases: C1, C2, C3, and C4. All cases include a wind gust of $u_{\mathrm{wd}}= \SI[unitsep=medium]{-3}{\meter\per\second}$ and a water current component $U_{\mathrm{l}}= \SI[unitsep=medium]{-0.5}{\meter\per\second}$. The scenarios are described as follows: 
\begin{itemize}
    \item C1: wind gust and water current only.
    \item C2: wind gust, water current, and moderate waves with two wave components ($N=2$), such that: 
    $A_1=\SI[unitsep=medium]{0.135}{\meter}$, $d_1=1$, $\mathrm{T}_1=3\,\SI[unitsep=medium]{}{\second}$, and $\sigma_1=\pi$;    
    $A_2=\SI[unitsep=medium]{0.75}{\meter}$, $d_2=1$, $\mathrm{T}_2=5.7\,\SI[unitsep=medium]{}{\second}$, and $\sigma_2=0$.
    \item C3: high-frequency small-amplitude waves (head-seas), with     $A_1=\SI[unitsep=medium]{0.135}{\meter}$, $d_1=-1$, $\mathrm{T}_1=3\,\SI[unitsep=medium]{}{\second}$, and $\sigma_1=0$. 
    \item C4: high-amplitude low-frequency waves (head-seas), with     $A_1=\SI[unitsep=medium]{1.65}{\meter}$, $d_1=-1$, $\mathrm{T}_1=7\,\SI[unitsep=medium]{}{\second}$, and $\sigma_1=0$. 
\end{itemize}
\noindent Note that the wave components definitions in each scenario is independent from the other scenarios. Sample visual illustrations of the environments in C1 and C2 are given in Fig.~\ref{fig_simulation_scenarios}, which are generated via the custom-built simulator that we specifically developed to serve as a physics engine and provide live animations for tethered UAV$-$buoy locomotives.
In both cases, C1 and C2, the buoy is commanded to accelerate to reach an inertial velocity $\bar{V}=\SI[unitsep=medium]{5}{\meter\per\second}$, after which it gradually decelerates to $\SI[unitsep=medium]{0}{\meter\per\second}$ then to $\SI[unitsep=medium]{-4}{\meter\per\second}$. The desired reference mean sea level altitude is $\bar{z}_{\mathrm{u}}=\SI[unitsep=medium]{5.0}{\meter}$, which corresponds to a mean elevation angle of $\bar{\alpha}_0 = 45^{\circ}$.
The system is initiated in the decoupled state, and its velocity is initiated to be equivalent to the zero-time water velocity via (\ref{eq_wave_vel}) and (\ref{eq_current}). 
Based on Assumption \ref{assump_stable_buoy_rot_dyn}, the buoy's pitch angle is calculated by differentiating (\ref{eq_water_surface_elevation}) with respect to $x_{\mathrm{b}}$:
\begin{equation} \label{eq_buoy_tilt_angle}
        \theta_{\mathrm{b}} = 
        \mathrm{atan}\Big( \sum_{n}^{N} A_n k_n \cos(d_n \omega_n t - k_n x_{\mathrm{b}} + \sigma_n)
        \Big).    
\end{equation}
\par
While cases C1 and C2 provide a baseline evaluation of the proposed robotic system and its controller, cases C3 and C4 challenge its performance in extreme cases, i.e. under fast oscillations (C3), and high amplitude undulations (C4). A low-slope ramp velocity input ($V = 0.25 \,t$) is applied to carefully capture the performance of the system at different velocities, and head-seas are considered to emphasize and validate the universality of the buoy's dynamic heave performance captured in Fig.~\ref{fig_dynamic_amplification}. 
%
\begin{figure}
\centerline{\includegraphics[width=3.45in]{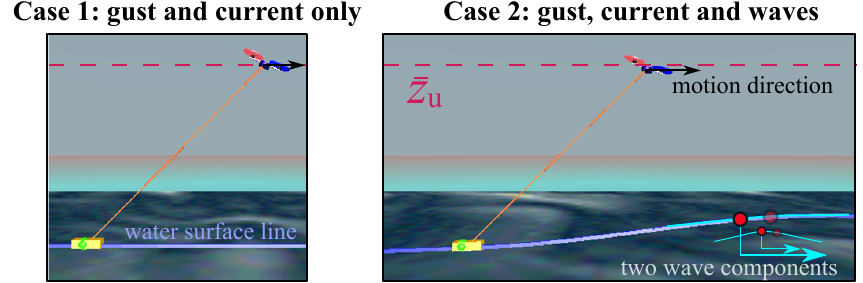}}
\caption{Sample screenshots from animations of two  simulation scenarios (C1 and C2) in true scale. Animations are generated via a custom-built simulator that is specifically developed to serve as a physics engine for tethered UAV$-$buoy locomotives.}
\label{fig_simulation_scenarios}
\end{figure}
%
\subsection{Simulation Results and Discussion}
The simulation results for C1 and C2 are shown in Fig.~\ref{fig_C1_C2}a and Fig.~\ref{fig_C1_C2}b, respectively. 
%
\begin{figure}
\centerline{\includegraphics[width=3.45in]{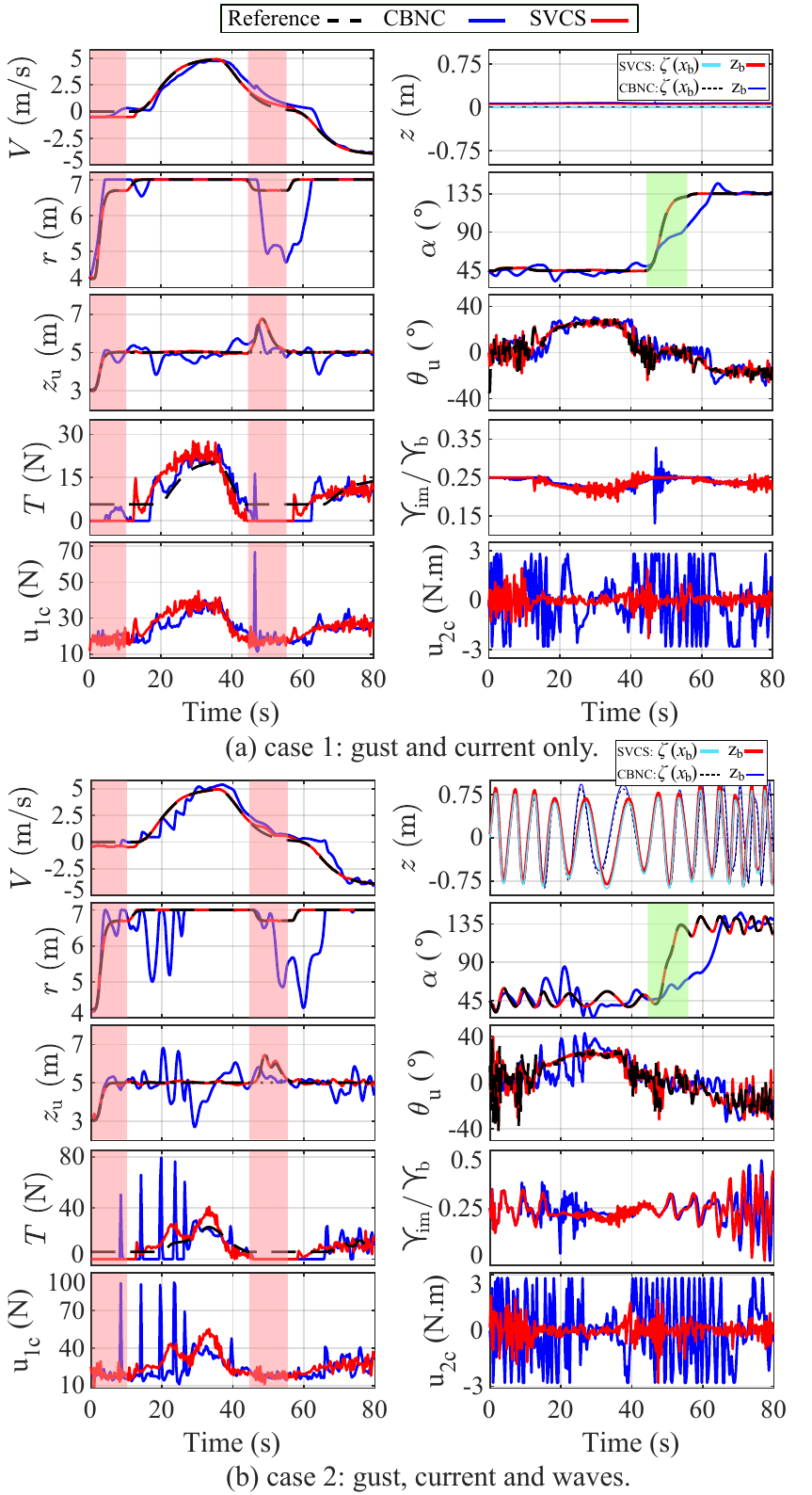}}
\caption{States and control inputs for the simulation scenarios C1 and C2, with both the state machine-supervised surge velocity control system (SVCS) and a standard Cartesian-based nominal UAV controller (CBNC). The region in red marks when the mode is not `pulling', and the region in green marks when the mode is not `repositioning'.}
\label{fig_C1_C2}
\end{figure}
%
In both cases, the quadrotor UAV equipped with the SVCS is able to pull the buoy at the desired velocities ($V$) without overshoot,  with minimal fluctuations in velocity ($V$) and elevation ($z_{\mathrm{u}}$), while not violating constraint (\ref{eq_water_contact_condition}) as indicated by the immersed volume plot ($\curlyvee_{\mathrm{im}}/\curlyvee_{\mathrm{b}}$), and without unnecessarily decoupling the system (as seen in the $r$ subplot). 
The resulting commands to the UAV, $u_{1\mathrm{c}}$ and $u_{2\mathrm{c}}$, are bounded and free of high-frequency chattering.
On the other hand, the Cartesian-based controller without state machine supervision (CBNC) results in significantly larger velocity ($V$) and elevation ($z_{\mathrm{u}}$) fluctuations, reaching up to $\SI[unitsep=medium]{2}{\meter\per\second}$ and $\SI[unitsep=medium]{2.2}{\meter}$, respectively, in the wavy environment (C2). 
\par
The proposed controller adjusts the elevation angle, $\alpha$, while the buoy elevation, $z_{\mathrm{b}}$, changes $-$ driven by the contour-following behaviour of the buoy under long waves excitation $-$ to prevent unnecessary UAV vertical motion ($z_{\mathrm{u}}$) as evident in Fig.~\ref{fig_C1_C2}b. The adjustments are such that $\bar{\alpha}$ varies in response to changes in the buoy's elevation, $z_{\mathrm{b}}$, which is proportional to the wave encounter frequency.
It is also observed that the elevation angle ($\alpha$) and pitch angle ($\theta_{\mathrm{u}}$) are smooth and stable, and exhibit small tracking error for the SVCS.
We note that the reference UAV pitch angle, $\theta_{\mathrm{u,c}}$, for the CBNC is not plotted for figure clarity purposes, since both systems possess the same inner-loop controller, in addition to the fact that CBNC has no reference elevation angle, $\bar{\alpha}$, and radial position, $\bar{r}$.
\par
The SVCS-controlled UAV achieves the desired surge velocity of the buoy by adjusting the cable tension, $T$, in an appropriate and relatively smooth manner as seen in ($T$).
Contrarily, the CBNC has no direct control of the cable tension and the radial position of the UAV, which leads to repeated large input pulses that deteriorate the transient performance. 
Finally, it is observed that the change in the immersed volume of the buoy greatly depends on the encounter frequency.
It is also noticed that the buoy remains in contact with the water surface (($\curlyvee_{\mathrm{im}}/\curlyvee_{\mathrm{b}}$)) for both controllers.
\par
To quantify the performance of the two controllers, the trajectory tracking errors of $V$ and $z_{\mathrm{u}}$, and the energy consumed by the UAV (calculated per \cite{[Christoph_CDC]}) are reported in Table~\ref{tab_tracking_error_energy}. The SVCS results in an average reduction in the tracking error of $88\,\%$ and in energy consumption of $42\,\%$ versus the CBNC.
\par
%
\begin{table}
    \caption{Tracking Error and Consumed Energy}
    \begin{center}
    \begin{tabular}{ P{0.7cm} | P{0.85cm} P{0.85cm} | P{0.85cm} P{0.85cm} | P{0.85cm} P{0.85cm}} 
        \hline 
        \multirow{3}{*}{Case} & \multicolumn{2}{c|}{$V$, error}  & \multicolumn{2}{c|}{$z_{\mathrm{u}}$, error} & \multicolumn{2}{c}{cons. en.}  \\ 
        & \multicolumn{2}{c|}{($\SI[unitsep=medium]{}{cm\per\second}$)}  & \multicolumn{2}{c|}{($\SI[unitsep=medium]{}{cm}$)} & \multicolumn{2}{c}{ ($\SI[unitsep=medium]{}{\kilo\joule}$)}  \\ \cline{2-7} 
        &  CBNC   &  SVCS &  CBNC  & SVCS  &  CBNC  & SVCS \\
        \Xhline{2\arrayrulewidth}
        C1  &  36.4  &  5.4 &  28.4 & 2.7 & 111.8 &  58.9 \\
        C2  &  61.3  &  6.1 &  42.8 & 5.9 & 93.4 &  61.2 \\
        \hline
    \end{tabular}
    \label{tab_tracking_error_energy}
    \end{center}
\end{table}
\begin{remark} \label{rem_correlated_effect}
    Expressing the SVCS in polar coordinates yields a correlated control performance, which means that each control channel, ($u_{r}$ and $u_{\alpha}$), independently affects one control parameter ($\alpha$ or $V$). However, this is not the case for the Cartesian-based controller (CBNC), where each of the $\dot{x}$- and $z$-control channels has a dual effect on each control parameter, which results in a degraded performance. 
\end{remark}
\par
In summary, the CBNC does not cope with the introduction of waves to prevent them from disturbing the system in an unpredictable manner, nor it respects the system configuration, whereas the SVCS shows its disturbance-rejection property in significantly attenuating the waves' effect even without knowing their characteristics. 
All of the above factors, combined, justify the design of the relatively complex SVCS for the proposed marine locomotive UAV system.
\par
The performance of the SVCS-equipped locomotive system against high-frequency and high-amplitude waves are shown in Fig.~\ref{fig_C3_C4}a and Fig.~\ref{fig_C3_C4}b, respectively. The first separation of the buoy from the water surface occurs at $t = \SI[unitsep=medium]{21}{\second}$ and $V=\SI[unitsep=medium]{5}{\meter\per\second}$ in C3, and $t = \SI[unitsep=medium]{41}{\second}$ and $V=\SI[unitsep=medium]{11}{\meter\per\second}$ in C4, which are marked by the yellow strips in their respective subplots.
The tracking accuracy in $V$ and $z_{\mathrm{u}}$ demonstrates that the proposed SVCS performs well in the considered extreme scenarios, as long as they are within the working zones established in Fig.~\ref{fig_dynamic_amplification}. Beyond these zones, i.e. after the instances marked by the yellow strips in Fig.~\ref{fig_C3_C4}, the buoy `fly-over' deteriorates the system performance, which manifests as jumps of the buoy above the waves as exhibited in the $z$ subplot.
\par
%
\begin{figure}
\centerline{\includegraphics[width=3.45in]{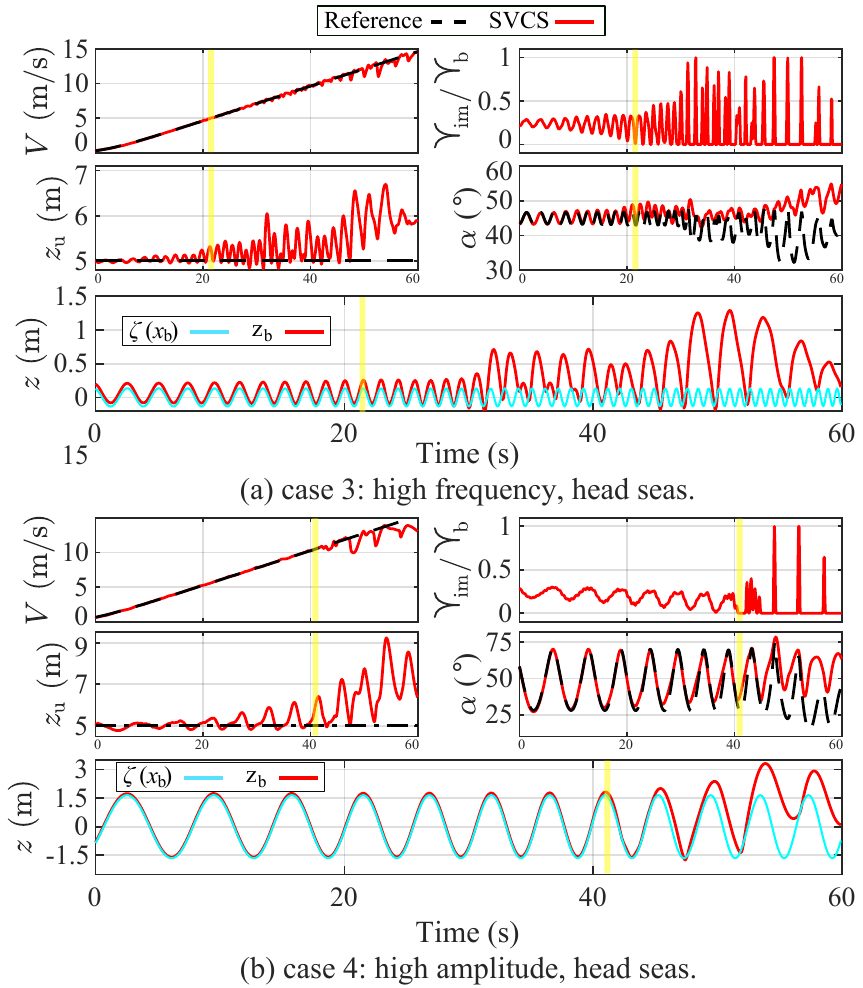}}
\caption{Simulation scenarios C3 (high-frequency small-amplitude waves) and C4 (high-amplitude low-frequency waves) to illustrate the SVCS performance against extreme sea conditions. The yellow strips mark the first buoy$-$water separation (`fly-over') in each case.}
\label{fig_C3_C4}
\end{figure}
%
\section{Practical Considerations} \label{sec_practical_considerations}
For the proposed system to lend itself well to physical implementation, we target in this section critical aspects that are essential to experimentally validate the proposed system, in preparation for its deployment in real-life.
\subsection{States Estimation} \label{sec_states_estimation}
%
%
The SVCS requires the following states for feedback: 
$r$, $\dot{r}$, $\ddot{r}$, $\alpha$, $\dot{\alpha}$, $\theta_{\mathrm{u}}$, $\dot{\theta}_{\mathrm{u}}$, $V$, $\ddot{x}_{\mathrm{b}}$, and $\ddot{z}_{\mathrm{b}}$.
Unlike most applications of the tethered UAV problem, the tether of the UAV$-$buoy system is not anchored to a fixed point  as in \cite{Sandino2014TetherLandingNoGPS,Nicotra2014,Tognon2017}; furthermore, knowing that the system is allowed to decouple, resulting in a slack cable, observer-based methods that target system state estimation in the taut cable case, as in  \cite{Tognon2020AerialRobotsTethers}, cannot be solely employed for state feedback. 
However, since the UAV$-$buoy system works above the water surface, it is practical to assume that GPS coverage is available, which allows for UAV pose estimation in the inertial frame using a GPS sensor and an inertial measurement unit (IMU) that is equipped with a magnetometer \cite{Tognon2020AerialRobotsTethers}. We also note that if the system is designed to operate in the vicinity of a marine structure, Real-Time Kinematic (RTK) GPS solutions can also be utilized to attain more accurate inertial state estimation \cite{Daly2015Quadrotor_Husky_RTK}.
To solve the state estimation problem when the cable is slack, the UAV can be equipped with a stereo camera to detect and estimate the buoy location in the camera frame using special-purpose algorithms \cite{Hussein2016StereoLaserObstacles,Schauwecker2014DualStereoNavigation}, from which the UAV's relative radial coordinates to the buoy ($r$ and $\alpha$), and the buoy's velocity ($V$) can be estimated \cite{Pizzoli2014MonocularReconstruction}. We note that a monocular camera can provide adequate accuracy for the control problem on hand only if the buoy's dimensions are known a priori \cite{Falanga2017NarrowGaps}. As for laser-baser sensory equipment, they are susceptible to sun rays exposure and water surface refraction, which deems them unsuitable for such applications.
Last but not least, using encoders placed on the UAV can help with measuring the cable's length and elevation angle in the taut-cable case, and a force sensor (e.g. load cell) allows measurement of the cable's tension \cite{Tognon2020AerialRobotsTethers}, thus providing the control system with additional information to improve its performance.
\subsection{Power considerations}
To make the system more energy efficient, it can be designed to allow the UAV to land on the buoy, or to float directly on the water surface, during long standby periods \cite{Shao2019UAV-USV}. Another alternative to further extend the work-time of the system is to integrate an umbilical power cable within the tether.
Furthermore, using an umbilical power line with power banks stationed on the buoy can be more efficient than increasing the on-board power capacity of the UAV under specific conditions. Also, a small relative buoy$-$water velocity and a streamlined buoy shape can result in better energy efficiency.
We note that in case of large umbilical power transmission cables, their mass cannot be neglected and must be compensated for in the control law of the coupled dynamics model as in \cite{Nguyen2019} and \cite{Kourani2021_Tethered_UAV_Buoy}; and in the decoupled form, the UAV controller should be modified to compensate for the cable mass as in \cite{Dai2014}.
\subsection{Platform}
The locomotive UAV system can be deployed from ships and marine structures. It can be an independent system, and if designed and equipped to work autonomously, it can link itself to the target floating object using an on-board cable and perform manipulation afterward. Such designs have higher mobility, easier deployability, and independence from potentially-bulky buoys.
\section{Conclusion} \label{sec_conclusion}

The novel problem of a marine locomotive UAV system is defined, in which a quadrotor UAV is tethered to a floating buoy to control its surge velocity. The system dynamics are separately modeled for each subsystem including the water medium, the buoy, and the UAV, then combined via the Euler-Lagrange formulation. 
The attainable setpoints and constraints of the proposed system are defined, then a precision motion control system is designed to manipulate the surge velocity of the buoy within certain limits, which require maintaining the cable in a taut state and keeping the buoy in contact with the water surface. 
A simulation environment is defined, and the proposed SVCS is validated and compared to a nominal Cartesian-based UAV controller, while showing superior tracking performance and disturbance rejection in certain waves, surface currents, and wind conditions.
\par
The proposed system paves the way in front of a wide variety of novel marine applications for multirotor UAVs, where their high speed and maneuverability, as well as their ease of deployment and wide field of vision, give them a superior advantage. It best suits applications that require remote and fast manipulation with minimal water surface disruption. 

Going forward, we aim to extend the problem to the three-dimensional (3D) space, perform energy minimization techniques for the system's path-planning under different wave and current conditions, and study the buoy's stability by introducing additional constraints to the system.
The controller and buoy stability will be further challenged in more complex wave scenarios to uncover the buoy's shape effects on the system performance.
A parallel future path entails building a prototype of the proposed system and performing experimental validation of the proposed controller.
\section*{ACKNOWLEDGMENT}

This work is supported by the University Research Board (URB) at the American University of Beirut (AUB).
\bibliography{main}
\end{document}